\def\eqref#1{equation~\ref{#1}}
\def\Eqref#1{Equation~\ref{#1}}
\def\1{\bm{1}}
\DeclareMathAlphabet{\mathsfit}{\encodingdefault}{\sfdefault}{m}{sl}
\SetMathAlphabet{\mathsfit}{bold}{\encodingdefault}{\sfdefault}{bx}{n}
\newtheorem{theorem}{Theorem}
\newtheorem{lemma}{Lemma}
\title{Optimal ANN-SNN Conversion for High-accuracy and Ultra-low-latency Spiking Neural Networks}
\author{Tong Bu\textsuperscript{\rm 1}, Wei Fang\textsuperscript{\rm 1}, Jianhao Ding\textsuperscript{\rm 1}, PengLin Dai\textsuperscript{\rm 2}, Zhaofei Yu\textsuperscript{\rm 1 \rm *}, Tiejun Huang\textsuperscript{\rm 1}\\
\textsuperscript{\rm 1} Peking University,
\textsuperscript{\rm 2} Southwest Jiaotong University\\
\textsuperscript{\rm *} Corresponding author: yuzf12@pku.edu.cn 
}
\begin{document}

\maketitle
\begin{abstract}
Spiking Neural Networks (SNNs) have gained great attraction due to their distinctive properties of low power consumption and fast inference on neuromorphic hardware. As the most effective method to get deep SNNs, ANN-SNN conversion has achieved comparable performance as ANNs on large-scale datasets. Despite this, it requires long time-steps to match the firing rates of SNNs to the activation of ANNs. As a result, the converted SNN suffers severe performance degradation problems with short time-steps, which hamper the practical application of SNNs. In this paper, we theoretically analyze ANN-SNN conversion error and derive the estimated activation function of SNNs. Then we propose the quantization clip-floor-shift activation function to replace the ReLU activation function in source ANNs, which can better approximate the activation function of SNNs. We prove that the expected conversion error between SNNs and ANNs is zero, enabling us to achieve high-accuracy and ultra-low-latency SNNs. We evaluate our method on CIFAR-10/100 and ImageNet datasets, and show that it outperforms the state-of-the-art ANN-SNN and directly trained SNNs in both accuracy and time-steps. To the best of our knowledge, this is the first time to explore high-performance ANN-SNN conversion with ultra-low latency (4 time-steps). Code is available at https://github.com/putshua/SNN\_conversion\_QCFS

\end{abstract}

\section{Introduction}
Spiking neural networks (SNNs) are biologically plausible neural networks based on the dynamic characteristic of biological neurons~\citep{mcculloch1943logical, izhikevich2003simple}. As the third generation of artificial neural networks \citep{maas1997networks}, SNNs have attracted great attention due to their distinctive properties over deep analog neural networks (ANNs)~\citep{roy2019towards}. Each neuron transmits discrete spikes to convey information when exceeding a threshold. 
For most SNNs, the spiking neurons will accumulate the current of the last layer as the output within $T$ inference time steps. The binarized activation has rendered dedicated hardware of neuromorphic computing~\citep{pei2019towards,debole2019truenorth, davies2018loihi}. This kind of hardware has excellent advantages in temporal resolution and energy budget. Existing work has shown the potential of tremendous energy saving with considerably fast inference~\citep{stockl2021optimized}.

In addition to efficiency advantages, the learning algorithm of SNNs has been improved by leaps and bounds in recent years. The performance of SNNs trained by backpropagation through time and ANN-SNN conversion techniques has gradually been comparable to ANNs on large-scale datasets~\citep{fang2021deep, rueckauer2017conversion}. Both techniques benefit from the setting of SNN inference time. Setting longer time-steps in backpropagation can make the gradient of surrogate functions more reliable~\citep{wu2018STBP, neftci2019surrogate,zenke2021remarkable}. However, the price is enormous resource consumption during training. Existing platforms such as TensorFlow and PyTorch based on CUDA have limited optimization for SNN training. In contrast, ANN-SNN conversion usually depends on a longer inference time to get comparable accuracy as the original ANN~\citep{sengupta2019going} because it is based on the equivalence of ReLU activation and integrate-and-fire model's firing rate~\citep{cao2015spiking}. Although longer inference time can further reduce the conversion error, it also hampers the practical application of SNNs on neuromorphic chips.

The dilemma of ANN-SNN conversion is that there exists a remaining potential in the conversion theory, which is hard to be eliminated in a few time steps~\citep{rueckauer2016theory}. Although many methods have been proposed to improve the conversion accuracy, such as weight normalization~\citep{diehl2015fast}, threshold rescaling~\citep{sengupta2019going}, soft-reset~\citep{han2020deep} and threshold shift~\citep{deng2020optimal}, tens to hundreds of time-steps in the baseline works are still unbearable. To obtain high-performance SNNs with ultra-low latency (e.g., 4 time-steps), we list the critical errors in ANN-SNN conversion and provide solutions for each error.
Our main contributions are summarized as follows:

\begin{itemize}
    \item We go deeper into the errors in the ANN-SNN conversion and ascribe them to clipping error, quantization error, and unevenness error. We find that unevenness error, which is caused by the changes in the timing of arrival spikes and has been neglected in previous works, can induce more spikes or fewer spikes as expected.
    \item We propose the quantization clip-floor-shift activation function to replace the ReLU activation function in source ANNs, which better approximates the activation function of SNNs. We prove that the expected conversion error between SNNs and ANNs is zero, indicating that we can achieve high-performance converted SNN at ultra-low time-steps.
    \item We evaluate our method on CIFAR-10, CIFAR-100, and ImageNet datasets. Compared with both ANN-SNN conversion and backpropagation training methods, the proposed method exceeds state-of-the-art accuracy with fewer time-steps. For example, we reach top-1 accuracy 91.18\% on CIFAR-10 with unprecedented 2 time-steps.
\end{itemize}

\section{preliminaries}
In this section, we first briefly review the neuron models for SNNs and ANNs. Then we introduce the basic framework for ANN-SNN conversion. 

\textbf{Neuron model for ANNs.} For ANNs, the computations of analog neurons can be simplified as the combination of a linear transformation and a non-linear mapping:
\begin{align}
    \label{ann}
	\bm{a}^{l} = h(\bm{W}^{l}\bm{a}^{l-1}),~~~l=1,2,...,M
\end{align}
where the vector $\bm{a}^l$ denotes the output of all neurons in $l$-th layer, $\bm{W}^{l}$ denotes the weight matrix between layer $l$ and layer $l-1$, and $h(\cdot)$ is the ReLU activation function.

\textbf{Neuron model for SNNs.} Similar to the previous works~\citep{cao2015spiking,diehl2015fast,han2020rmp}, we consider the Integrate-and-Fire (IF) model for SNNs. If the IF neurons in $l$-th layer receive the input $\bm{x}^{l-1}(t)$ from last layer,
the temporal potential of the IF neurons can be defined as:
\begin{align}
\bm{m}^l(t)&=\bm{v}^l(t-1)+\bm{W}^{l}\bm{x}^{l-1}(t),  \label{firem} 
\end{align}
where $\bm{m}^l(t)$ and $\bm{v}^{l}(t)$ represent the membrane potential before and after the trigger of a spike at time-step $t$. $\bm{W}^{l}$ denote the weight in $l$-th layer.
As soon as any element $m_i^l(t)$ of $\bm{m}^{l}(t)$ exceeds the firing threshold $\theta^{l}$, the neuron will elicit a spike and update the membrane potential $v_i^l(t)$. To avoid information loss, we use the ``reset-by-subtraction'' mechanism~\citep{rueckauer2017conversion, han2020rmp} instead of the ``reset-to-zero'' mechanism, which means the membrane potential $v_i^l(t)$  is subtracted by the threshold value $\theta^{l}$ if the neuron fires.
Based on the threshold-triggered firing mechanism and the ``reset-by-subtraction'' of the membrane potential after firing discussed above, we can write the uplate rule of membrane potential as:
\begin{align}
\bm{s}^l(t)&=H (\bm{m}^l(t)-\bm{\theta}^l), \label{fires} \\
\bm{v}^{l}(t)&=\bm{m}^l(t)-\bm{s}^l(t)\theta^l.  \label{firev} 
\end{align}
Here $\bm{s}^l(t)$ refers to the output spikes of all neurons in layer $l$ at time $t$, the element of which equals 1 if there is a spike and 0 otherwise.
$H(\cdot)$  is the Heaviside step function. $\bm{\theta}^l$ is the vector of the firing threshold $\theta^{l}$. Similar to~\citet{deng2020optimal}, we
suppose that the postsynaptic neuron in $l$-th layer receives unweighted postsynaptic potential  $\theta^l$ if the presynaptic neuron in ${l-1}$-th layer fires a spike, that is:
\begin{align}
    \bm{x}^{l}(t)&=\bm{s}^l(t)\theta^l. \label{fire2}
\end{align}

\textbf{ANN-SNN conversion.} The key idea of ANN-SNN conversion is to map the activation value of an analog neuron in ANN to the firing rate (or average postsynaptic potential) of a spiking neuron in SNN. Specifically, we can get the potential update equation by combining \Eqref{firem} -- \Eqref{firev}:
\begin{align}
    \label{neuron}
    \bm{v}^{l}(t)-\bm{v}^{l}(t-1)=\bm{W}^{l}\bm{x}^{l-1}(t)-\bm{s}^l(t)\theta^l.
\end{align}
\Eqref{neuron} describes the basic function of spiking neurons used in ANN-SNN conversion. 
By summing \Eqref{neuron} from time $1$ to $T$  and dividing $T$ on both sides, we have:
\begin{align}
& \frac{\bm{v}^{l}(T)-\bm{v}^{l}(0)}{T}=\frac{\bm{W}^{l}\sum_{i=1}^{T} \bm{x}^{l-1}(i)}{T} - \frac{\sum_{i=1}^{T}\bm{s}^l(i)\theta^{l}}{T} \label{neuronT}.
\end{align}
If we use $ \bm{\phi}^{l-1}(T) =\frac{\sum_{i=1}^{T} \bm{x}^{l-1}(i)}{T}$ to denote the average postsynaptic potential during the period from 0 to $T$ and substitute \Eqref{fire2} into \Eqref{neuronT}, then we get:
\begin{align}
    \bm{\phi}^l(T)=\bm{W}^{l} \bm{\phi}^{l-1}(T) -\frac{\bm{v}^{l}(T)-\bm{v}^{l}(0)}{T}.
    \label{postpoten}
\end{align}
\Eqref{postpoten} describes the  relationship of the average postsynaptic potential of neurons in adjacent layers. Note that $\bm{\phi}^l(T) \geqslant 0$.
If we set the initial potential $\bm{v}^{l}(0)$  to zero and neglect the remaining term $\frac{\bm{v}^{l}(T)}{T}$ when the simulation time-steps $T$ is long enough, the converted SNN has nearly the same activation function as source ANN (\Eqref{ann}). However, high $T$ would cause long inference latency that hampers the practical application of SNNs. Therefore, this paper aims to implement high-performance ANN-SNN conversion with extremely low latency.

\begin{table}[t]
\renewcommand\arraystretch{1.2}
\caption{Summary of notations in this paper}
\label{tab:notations}
\centering
\scalebox{0.8}
{
\begin{threeparttable}
\begin{tabular}{ll|ll} 
\hline
 \textbf{Symbol}  & \textbf{Definition}     & \textbf{Symbol}   & \textbf{Definition}\\ \hline
 $l$              & Layer index             & $\bm{x}^l(t)$     & Unweighted PSP\tnote{1}\\ 
 $i$              & Neuron index            & $\bm{s}^l(t)$     & Output spikes\\ 
 $\bm{W}^l$       & Weight                  & $\bm{\phi}^l(T)$  & Average unweigthed PSP before time $T$ \\
 $\bm{a}^l$       & ANN activation values   & $\bm{z}^l$        & Weighted input from $l-1$ layer \\
 $t$              & Time-steps              & $h(\cdot)$        & ReLU function  \\
 $T$              & Total time-step         & $H(\cdot)$        & Heaviside step function  \\
 $\bm{\theta}^l$  & Threshold               & $L$               & Quantization step for ANN \\
 $\lambda^l$      & Trainable threshold in ANN & $\bm{Err}^l$      & Conversion Error  \\
 $\bm{m}^l(t)$    & Potential before firing & $\widetilde{\bm{Err}}^l$      & Estimated conversion Error  \\ 
 $\bm{v}^l(t)$    & Potential after firing  & $\varphi$         & Shift of quantization clip-floor function  \\ \hline
\end{tabular}
    \begin{tablenotes}
		\footnotesize
		\item[1] Postsynaptic potential 
	\end{tablenotes}
\end{threeparttable}
}
\end{table}



\section{conversion error analysis}
\label{sec:conversion_error}
In this section, we will analyze the conversion error between the source ANN and the converted SNN in each layer in detail. In the following, we assume that both ANN and SNN receive the same input from the layer $l-1$, that is, $\bm{a}^{l-1}= \bm{\phi}^{l-1}(T)$, and then analyze the error in layer $l$.
For simplicity, we use $\bm{z}^{l} = \bm{W}^{l}\bm{\phi}^{l-1}(T) = \bm{W}^{l}\bm{a}^{l-1}$ to substitute the weighted input from layer $l-1$ for both ANN and SNN. The absolute conversion error is exactly the outputs from converted SNN subtract the outputs from ANN:
\begin{align}
    \label{error1}
	\bm{Err}^{l} = \bm{\phi}^l(T)-	\bm{a}^{l} = \bm{z}^{l}-\frac{\bm{v}^{l}(T)-\bm{v}^{l}(0)}{T} - h(\bm{z}^{l}),
\end{align}
where $ h(\bm{z}^{l})=\text{ReLU} (\bm{z}^{l})$. It can be found from \Eqref{error1} that the conversion error is nonzero if $\bm{v}^{l}(T)-\bm{v}^{l}(0) \ne 0$ and $\bm{z}^l>0$. 
In fact, the conversion error is caused by three factors.


\textbf{Clipping error.} The output $ \bm{\phi}^l(T)$ of SNNs is in the range of $[0, \theta^l ]$ as $ \bm{\phi}^l(T) =\frac{\sum_{i=1}^{T} \bm{x}^{l}(i)}{T}=\frac{\sum_{i=1}^{T} \bm{s}^l(i)}{T}\theta^l$ (see \Eqref{fire2}). However, the output $\bm{a}^{l}$ of ANNs is in a much lager range of $[0, a^l_{max} ]$, where $a^l_{max}$ denotes the  maximum value of $\bm{a}^{l}$. As illustrated in Figure~\ref{fig1}a, $\bm{a}^{l}$ can be mapped to $ \bm{\phi}^l(T)$ by the following equation:
\begin{align}
    \label{mapping}
    \bm{\phi}^l(T)=\text{clip} \left(   \frac{ \theta^l}{T} \left \lfloor \frac{\bm{a}^{l} T}{\lambda^l}     \right \rfloor, 0, \theta^l  \right).
\end{align}
Here the clip function sets the upper bound $\theta^l$ and the lower bound $0$. $\lfloor \cdot \rfloor$ denotes the floor function. $\lambda^l$ represents the actual maximum value of output $\bm{a}^{l}$ mapped to the maximum value $\theta^l$ of $ \bm{\phi}^l(T)$. Considering that nearly 99.9\% activations of $\bm{a}^{l}$ in ANN are in the range of $[0, \frac{a^l_{max}}{3}]$, 
\citet{rueckauer2016theory} suggested to choose $\lambda^l$ according to 99.9\% activations. The activations between $\lambda^l$ and $a^l_{max}$ in ANN are mapped to the same value $\theta^l$ in SNN, which will cause conversion error called clipping error.

\begin{figure}[t] 
\centering
\subfigure[Clipping error]{
\includegraphics[width=0.23\textwidth]{./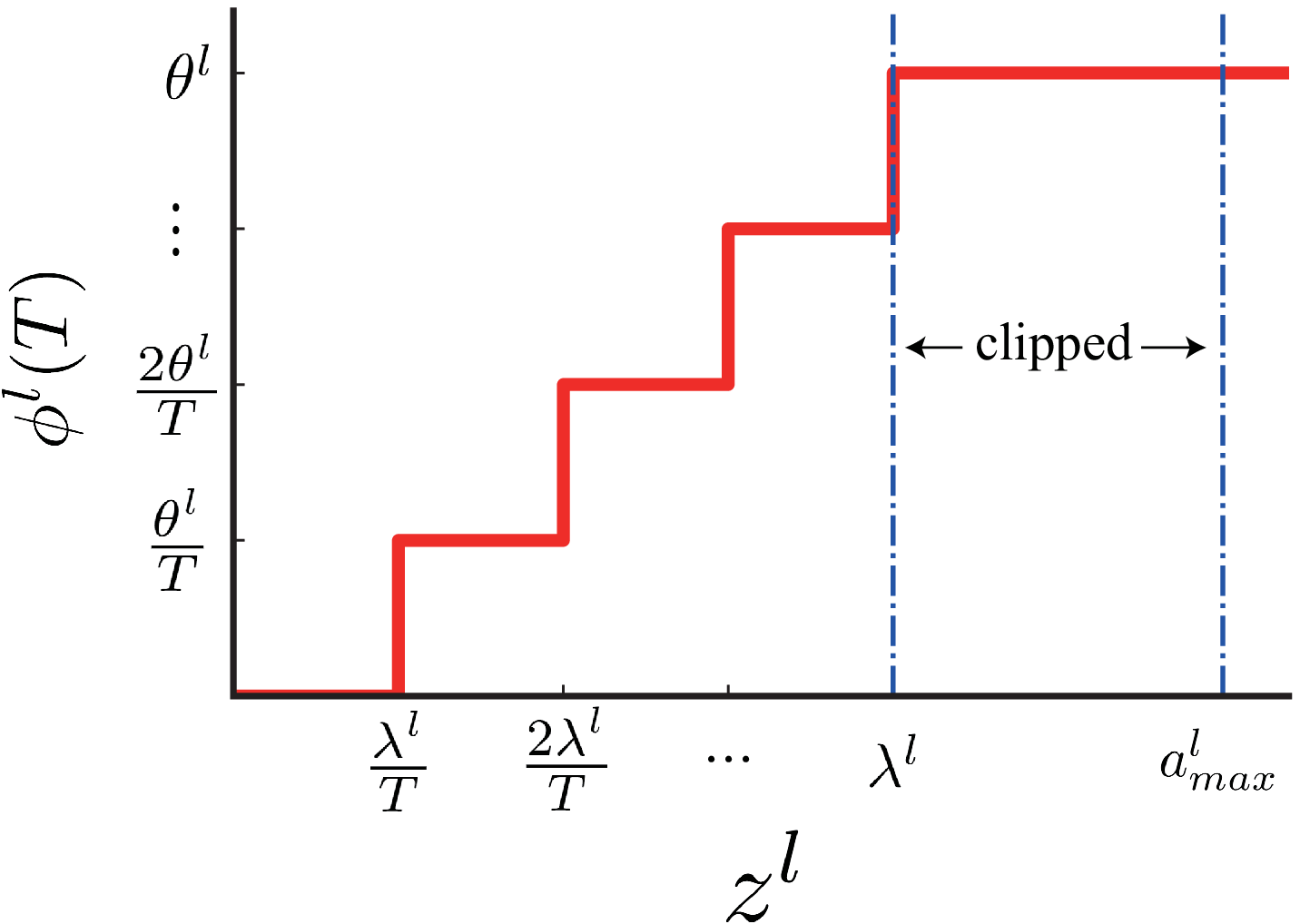}}
\subfigure[Even spikes]{
\includegraphics[width=0.23\textwidth]{./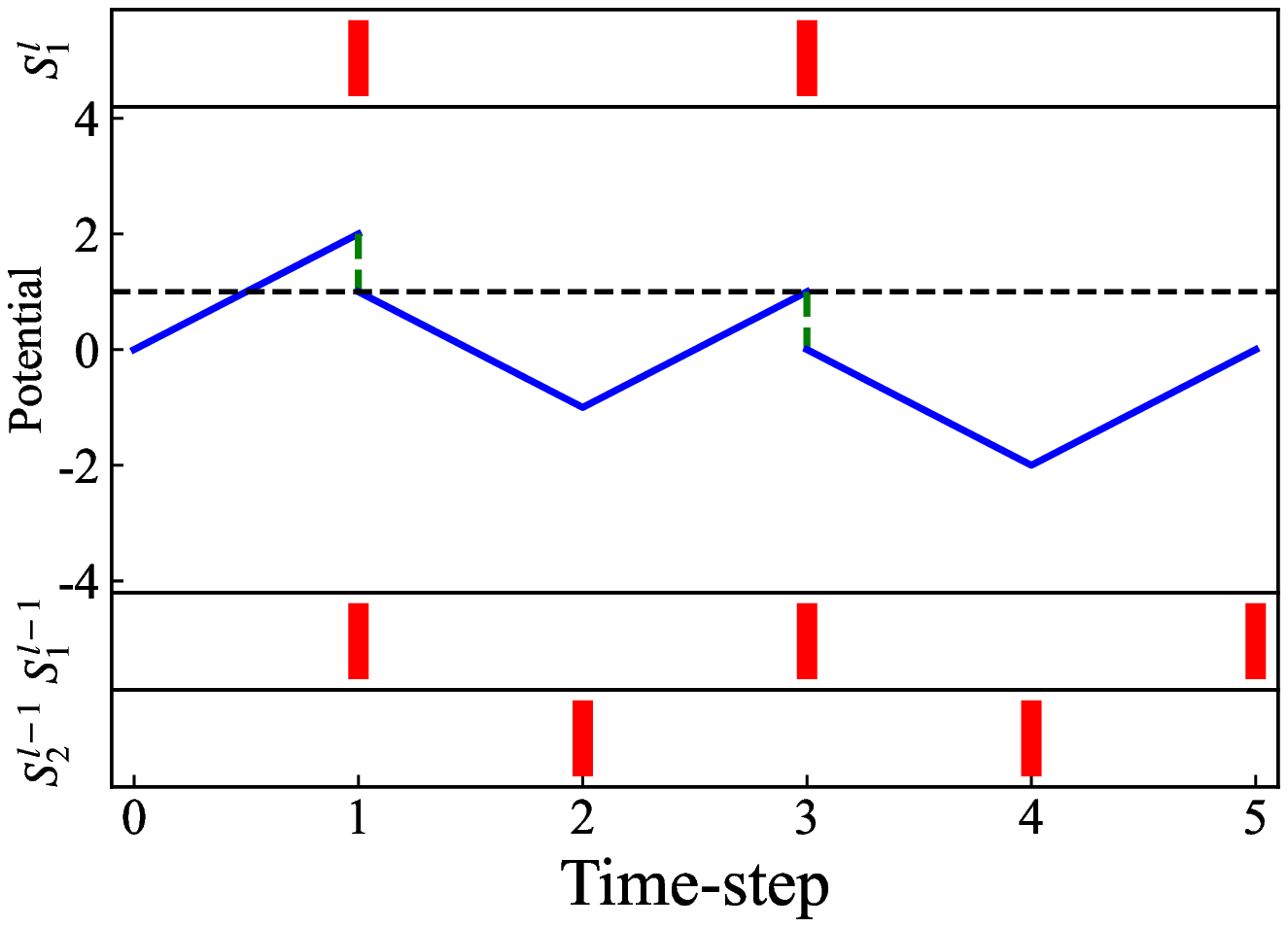}}
\subfigure[More spikes]{
\includegraphics[width=0.23\textwidth]{./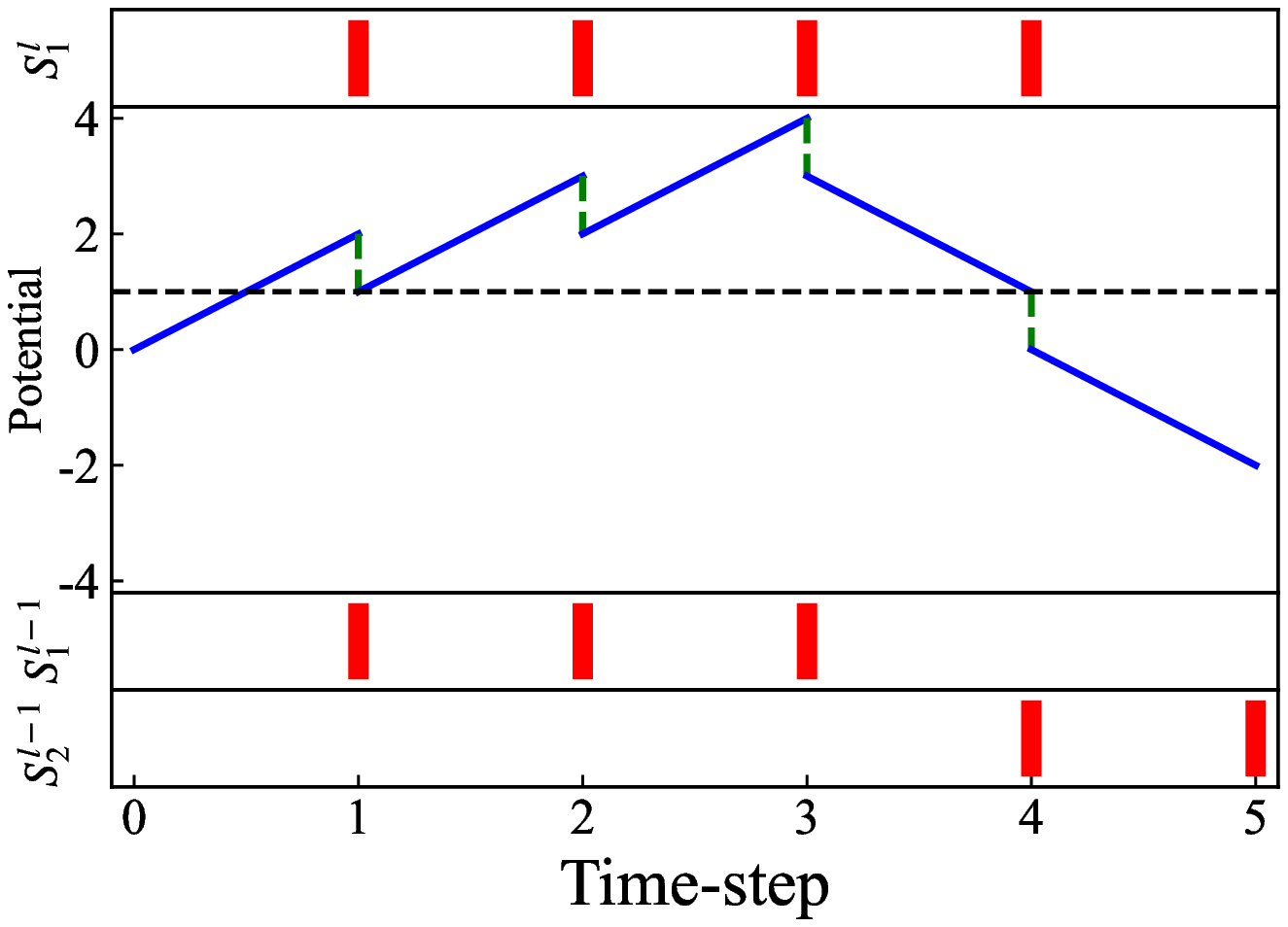}}
\subfigure[Fewer spikes]{
\includegraphics[width=0.23\textwidth]{./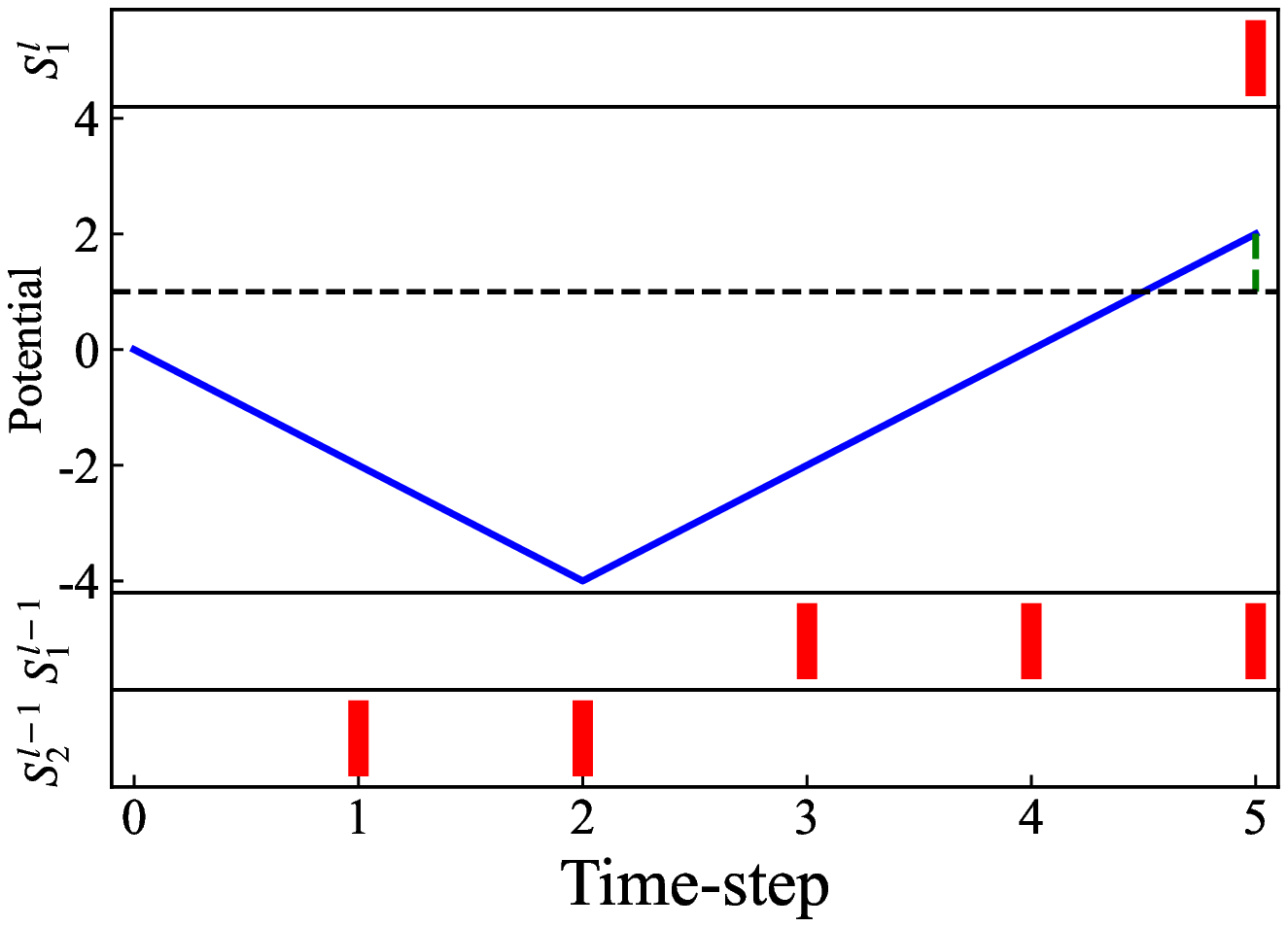}}
\caption{Conversion error between source ANN and converted SNN. $s^{l-1}_1$ and  $s^{l-1}_2$ denote the output spikes of two neurons in layer $l-1$, and $s^{l}_1$ denotes the output spikes of a neuron in layer $l$.}  
\label{fig1} 
\end{figure}

\textbf{Quantization error (flooring error).} The output spikes $\bm{s}^l(t)$ are discrete events, thus $\bm{\phi}^l(T)$ are discrete with quantization resolution $ \frac{ \theta^l}{T}$ (see \Eqref{mapping}). When mapping $\bm{a}^{l}$ to $\bm{\phi}^l(T)$, there exists unavoidable quantization error. For example, as illustrated in Figure~\ref{fig1}a, the activations of ANN in the range of $[\frac{\lambda^l}{T}, \frac{2\lambda^l}{T})$ are mapped to the same value $\frac{\theta^l}{T}$ of SNN.

\textbf{Unevenness error.} Unevenness error is caused by the unevenness of input spikes. If the timing of arrival spikes changes, the output firing rates may change, which causes conversion error. There are two situations: more spikes as expected or fewer spikes as expected. To see this, in source ANN, we suppose that two analog neurons in layer $l-1$ are connected to an analog neuron in layer $l$ with weights 2 and -2, and the output vector $\bm{a}^{l-1}$ of neurons in layer $l-1$ is $[0.6, 0.4]$. Besides, in converted SNN, we suppose that the two spiking neurons in layer $l-1$ fire 3 spikes and 2 spikes in 5 time-steps (T=5), respectively, and the threshold $\theta^{l-1}=1$. Thus, $\bm{\phi}^{l-1}(T)=\frac{\sum_{i=1}^{T} \bm{s}^{l-1}(i)}{T}\theta^{l-1}=[0.6, 0.4]$. Even though $\bm{\phi}^{l-1}(T)= \bm{a}^{l-1}$ and the weights are same for the ANN and SNN,  $\bm{\phi}^{l}(T)$ can be different from $\bm{a}^{l}$ if the timing of arrival spikes changes. According to  \Eqref{ann}, the ANN output $\bm{a}^{l}=\bm{W}^{l}\bm{a}^{l-1}=[2,-2] [0.6, 0.4]^{T}=0.4$. As for SNN, supposing that the threshold  $\theta^{l}=1$, there are three possible output firing rates, which are illustrated in Figure~\ref{fig1} (b)-(d).
If the two presynaptic neurons fires at $t=1,3,5$ and $t=2,4$ (red bars) respectively with weights 2 and -2, the postsynaptic neuron will fire two spikes at $t=1,3$ (red bars), and   $\bm{\phi}^{l}(T)=\frac{\sum_{i=1}^{T} \bm{s}^l(i)}{T}\theta^{l}=0.4=\bm{a}^{l}$.
However, if the presynaptic neurons fires at $t=1,2,3$ and $t=4,5$, respectively, the postsynaptic neuron will fire four spikes at $t=1,2,3,4$, and  $\bm{\phi}^{l}(T)=0.8>\bm{a}^{l}$.  If the presynaptic neurons fires at $t=3,4,5$ and $t=1,2$, respectively, the postsynaptic neuron will fire only one spikes at $t=5$, and  $\bm{\phi}^{l}(T)=0.2<\bm{a}^{l}$. 




Note that the clipping error and quantization error have been proposed in \cite{li2021free}. There exist interdependence between the above three kinds of errors. Specifically, the unevenness error will degenerate to the quantization error if $\bm{v}^{l}(T)$ is in the range of $[0, \theta^l]$. 
Assuming that the potential $\bm{v}^{l}(T)$ falls into $[0, \theta^{l}]$ will enable us to estimate the activation function of SNNs ignoring the effect of unevenness error.
Therefore, an estimation of the
output value $\bm{\phi}^l(T)$ in a converted SNN can be formulated with the combination of clip function and floor function, that is:
\begin{align}
\label{snn_est}
\bm{\phi}^l(T)  \approx \theta^{l}~\mathrm{clip} \left( \frac{1}{T}\left \lfloor \frac{\bm{z}^{l} T  + \bm{v}^{l}(0)}{\theta^{l}} \right \rfloor, 0, 1 \right).
\end{align}
The detailed derivation is in the Appendix. With the help of this estimation for the SNN output, the estimated conversion error $\widetilde{\bm{Err}}^{l}$ can be derived from \Eqref{error1}:
\begin{align}
\label{err_errornew}
	\widetilde{\bm{Err}}^{l} = \theta^{l}~\mathrm{clip} \left(\frac{1}{T}\left \lfloor \frac{\bm{z}^{l} T  + \bm{v}^{l}(0)}{\theta^{l}} \right \rfloor, 0, 1 \right) - h(\bm{z}^{l}) \approx {\bm{Err}}^{l} .
\end{align}

\section{Optimal ANN-SNN conversion}
\label{sec:optimal}

\begin{figure}[t] 
\centering
\subfigure[$L=T=4$]{
\includegraphics[width=0.27\textwidth]{./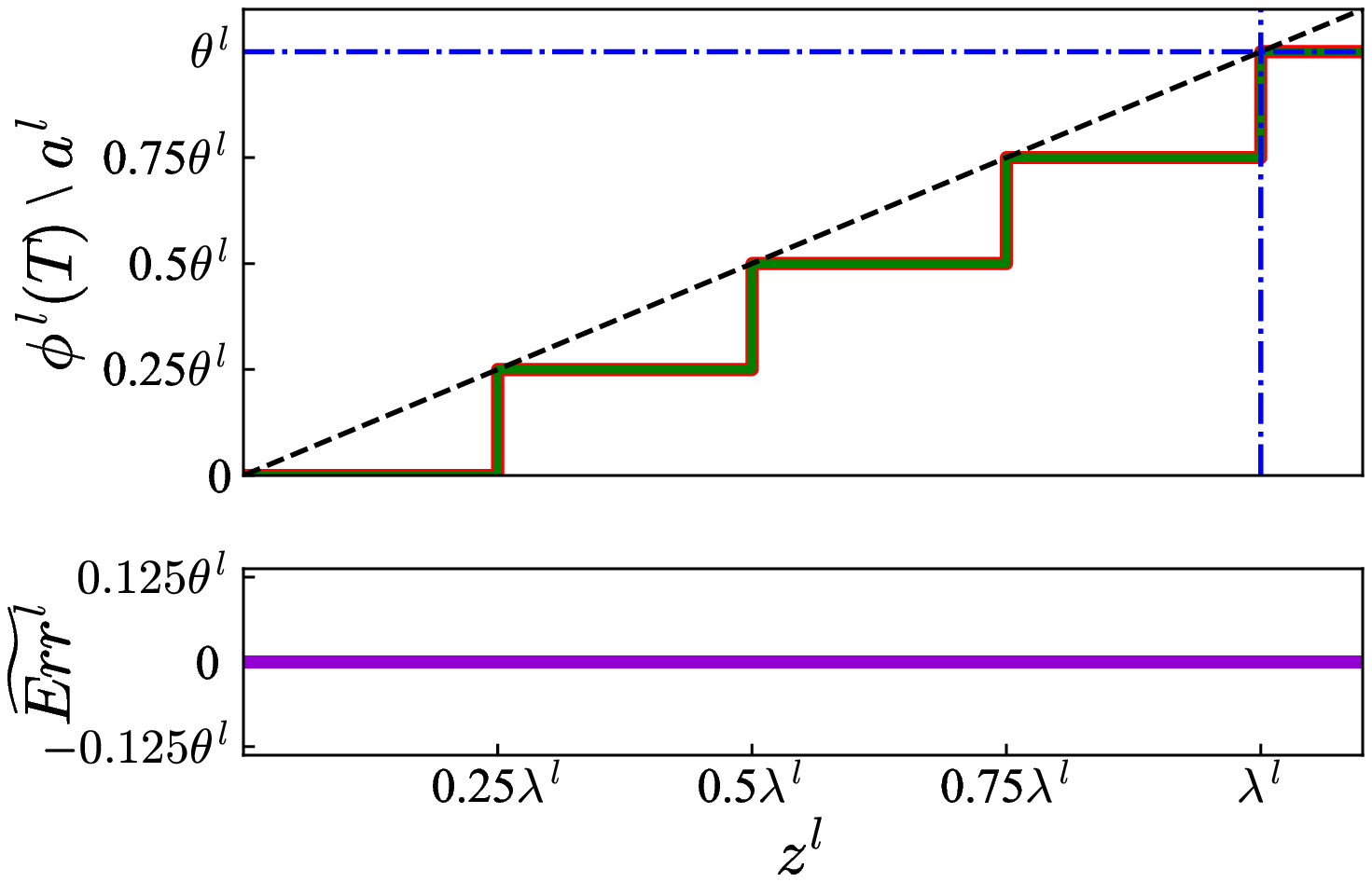}}
\quad\quad
\subfigure[$L=4, T=8$]{
\includegraphics[width=0.27\textwidth]{./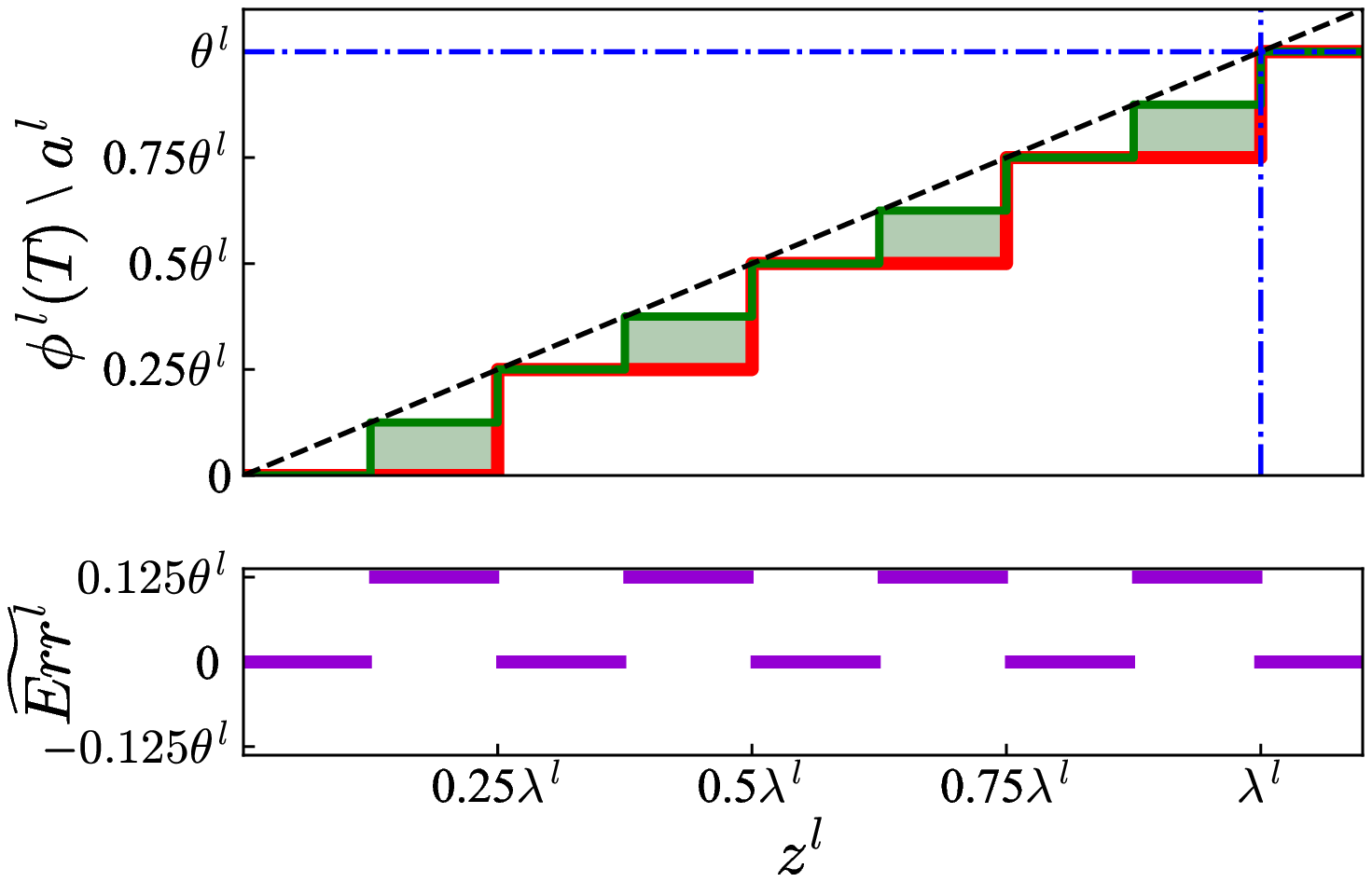}}
\quad\quad
\subfigure[$L=4, T=8, \bm{\varphi}=\bm{0.5}$]{
\includegraphics[width=0.27\textwidth]{./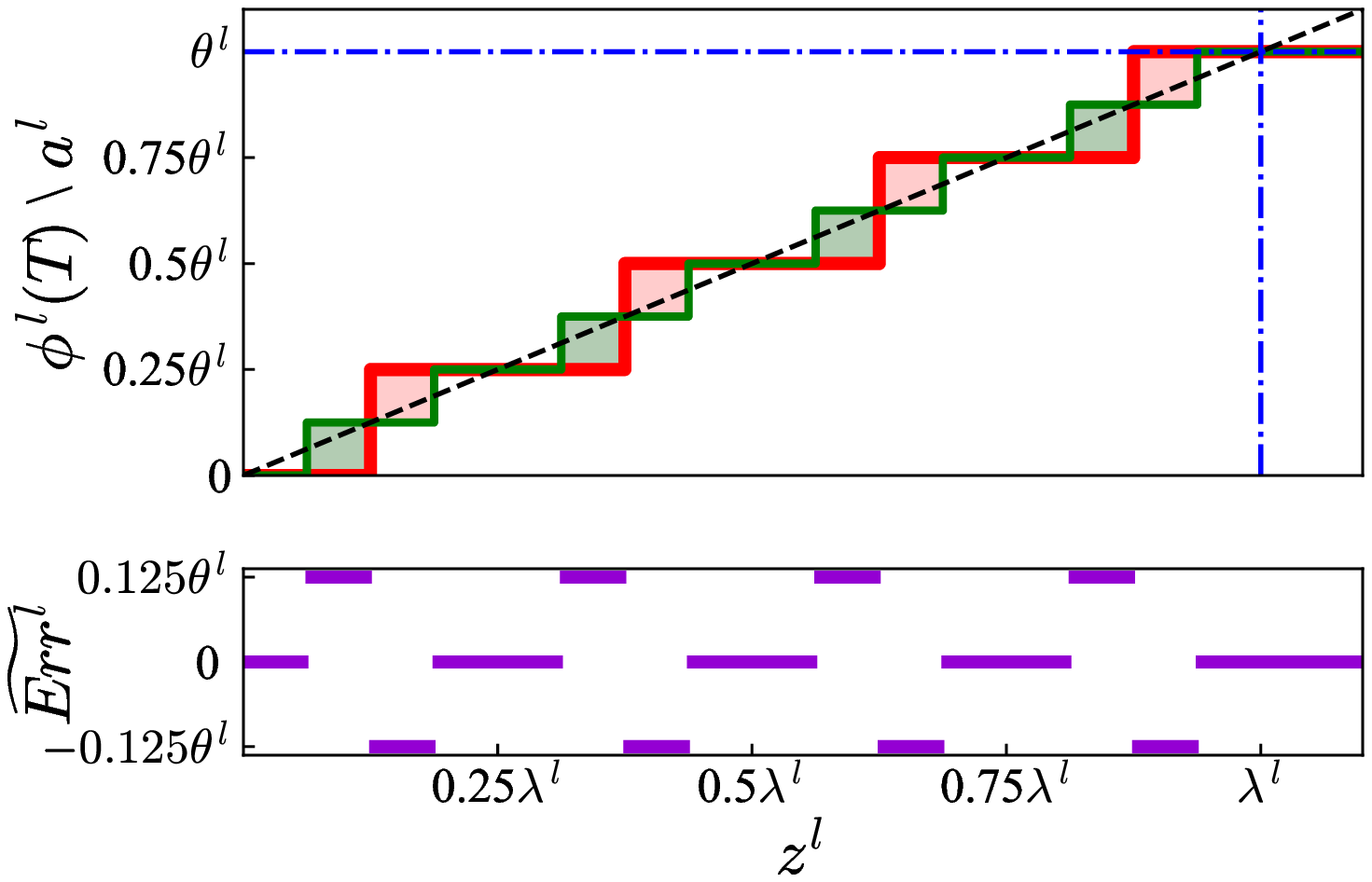}}
\caption{Comparison of SNN output $\bm{\phi}^l(T)$ and ANN output  $\bm{a}^l$ with same input $\bm{z}^l$}
\label{fig2} 
\end{figure}

\subsection{quantization clip-floor activation function}
According to the conversion error of \Eqref{err_errornew}, it is natural to think that if the commonly used ReLU activation function $h(\bm{z}^{l})$ is substituted by a clip-floor function with a given quantization steps $L$ (similar to \Eqref{snn_est}), the conversion error at time-steps $T=L$ will be eliminated. Thus the performance degradation problem at low latency will be solved. 
As shown in \Eqref{annnew}, we proposed the quantization clip-floor activation function to train ANNs.
\begin{align}
    \label{annnew}
	\bm{a}^{l} = \bar{h}(\bm{z}^{l})=\lambda^l~\mathrm{clip} \left(\frac{1}{L}\left \lfloor \frac{\bm{z}^{l}L}{\lambda^l}\right \rfloor, 0, 1 \right),
\end{align}
where the  hyperparameter $L$ denotes quantization steps of ANNs, the trainable $\lambda^l$ decides the maximum value of $\bm{a}^{l}$ in ANNs mapped to the maximum of  $\bm{\phi}^l(T)$ in SNNs.
Note that $\bm{z}^{l} = \bm{W}^{l}\bm{\phi}^{l-1}(T) = \bm{W}^{l}\bm{a}^{l-1}$. With this new activation function, we can prove that the estimated conversion error between SNNs and ANNs is zero, and we have the following Theorem.


 \begin{theorem}
\label{t1}
An ANN with activation function (\ref{annnew}) is converted to an SNN with the same weights. If $T=L$, $\theta^l=\lambda^l$, and $\bm{v}^{l}(0)=\bm{0}$, then:
\begin{align}
    \widetilde{\bm{Err}}^{l} =\bm{\phi}^l(T)-	\bm{a}^{l} =\bm{0}.
\end{align}
\end{theorem}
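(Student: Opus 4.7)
The plan is to prove the theorem by direct substitution into the estimated SNN activation formula (\ref{snn_est}), which under the hypothesis that $\bm{v}^l(T)\in[0,\theta^l]$ (i.e.\ no unevenness error) is actually an equality rather than an approximation. Since both networks share the same input from layer $l-1$, we have $\bm{a}^{l-1}=\bm{\phi}^{l-1}(T)$, so the weighted input $\bm{z}^l=\bm{W}^l\bm{a}^{l-1}=\bm{W}^l\bm{\phi}^{l-1}(T)$ is identical on the ANN and SNN sides. The task then reduces to showing that, under the conditions $T=L$, $\theta^l=\lambda^l$, $\bm{v}^l(0)=\bm{0}$, the two output expressions coincide for any such $\bm{z}^l$.

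First, I would reproduce (or cite from the Appendix) the derivation of (\ref{snn_est}). The argument is: summing the potential update (\ref{neuron}) from $t=1$ to $T$ gives $\bm{v}^l(T)-\bm{v}^l(0)=\bm{z}^l T-\theta^l\sum_{t=1}^T \bm{s}^l(t)$. Rearranging, the total spike count is
\begin{equation*}
\sum_{t=1}^T \bm{s}^l(t)=\frac{\bm{z}^l T+\bm{v}^l(0)-\bm{v}^l(T)}{\theta^l}.
\end{equation*}
Because the spike count is a nonnegative integer bounded above by $T$, and the residual potential $\bm{v}^l(T)\in[0,\theta^l)$ by assumption, one obtains $\sum_{t=1}^T\bm{s}^l(t)=\mathrm{clip}\bigl(\lfloor (\bm{z}^l T+\bm{v}^l(0))/\theta^l\rfloor,0,T\bigr)$, and therefore $\bm{\phi}^l(T)=\theta^l\,\mathrm{clip}\bigl(\tfrac1T\lfloor (\bm{z}^l T+\bm{v}^l(0))/\theta^l\rfloor,0,1\bigr)$, which is precisely (\ref{snn_est}).

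Second, I would plug in the three hypotheses. Setting $\bm{v}^l(0)=\bm{0}$ kills the shift term, $T=L$ replaces the time-step count with the quantization level, and $\theta^l=\lambda^l$ matches the firing threshold to the trainable ANN scale. The SNN output then becomes
\begin{equation*}
\bm{\phi}^l(T)=\lambda^l\,\mathrm{clip}\!\left(\frac{1}{L}\left\lfloor\frac{\bm{z}^l L}{\lambda^l}\right\rfloor,0,1\right),
\end{equation*}
which is identical to the ANN activation $\bar h(\bm{z}^l)=\bm{a}^l$ defined in (\ref{annnew}). Substituting into (\ref{err_errornew}) yields $\widetilde{\bm{Err}}^l=\bm{\phi}^l(T)-\bm{a}^l=\bm{0}$.

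The only subtle point, and what I would flag as the main obstacle, is the legitimacy of (\ref{snn_est}) itself: it presumes $\bm{v}^l(T)\in[0,\theta^l]$ so that the unevenness error vanishes and the floor approximation becomes exact. This is precisely the scope of the ``estimated'' error notation $\widetilde{\bm{Err}}^l$, so the theorem should be stated as claimed, with the understanding that the residual unevenness error is not addressed here. Everything else reduces to matching symbols between (\ref{snn_est}) and (\ref{annnew}).
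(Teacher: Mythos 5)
Your proof is correct and follows essentially the same route as the paper: the paper's own proof of Theorem~\ref{t1} is a one-line substitution of $T=L$, $\theta^l=\lambda^l$, $\bm{v}^l(0)=\bm{0}$ into \eqref{err_errornew}, observing that the two clip-floor expressions coincide. Your additional re-derivation of \eqref{snn_est} and your explicit flagging of the hypothesis $\bm{v}^l(T)\in[0,\theta^l]$ (which confines the result to the \emph{estimated} error $\widetilde{\bm{Err}}^l$ and excludes the unevenness error) simply makes explicit what the paper defers to its Appendix, so the two arguments are the same in substance.
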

\begin{proof}
According to \Eqref{err_errornew}, and the conditions $T=L$, $\theta^l=\lambda^l$, $\bm{v}^{l}(0)=\bm{0}$, we have $\widetilde{\bm{Err}}^{l} =\bm{\phi}^l(T)-\bm{a}^{l}=\theta^{l}~\mathrm{clip} \left(\frac{1}{T}\left \lfloor \frac{\bm{z}^{l} T  + \bm{v}^{l}(0)}{\theta^{l}} \right \rfloor, 0, 1 \right) -\lambda^l~\mathrm{clip} \left(\frac{1}{L}\left \lfloor \frac{\bm{z}^{l}L}{\lambda^l}\right \rfloor, 0, 1 \right)=0$. 
\end{proof}
Theorem 1 implies that if the time-steps $T$ of the converted SNN is the same as the quantization steps $L$ of the source ANN, the conversion error will be zero. An example is illustrated in Figure~\ref{fig2}a, where $T=L=4$, $\theta^l=\lambda^l$. The red curve presents the estimated output $\bm{\phi}^l(T)$ of the converted SNNs with respective to different input $\bm{z}^l$, while the green curve represents the out $\bm{a}^l$ of the source ANN with respective to different input $\bm{z}^l$. As the two curve are the same, the estimated conversion error $\widetilde{\bm{Err}}^l$ is zero. 
Nevertheless, in practical application, we focus on the performance of SNNs at different time-steps. There is no guarantee that the conversion error is zero when $T$ is not equal to $L$. As illustrated in Figure~\ref{fig2}b, where $L=4$ and $L=8$, we can find the conversion error is greater than zero for some $\bm{z}^l$. This error will transmit layer-by-layer and eventually degrading the accuracy of the converted SNN. One way to solve this problem is to train multiple source ANNs with different quantization steps, then convert them to SNNs with different time-steps, but it comes at a considerable cost. In the next section, we propose the quantization clip-floor activation function with a shift term to solve this problem. Such an approach can achieve high accuracy for different time-steps, without extra computation cost.

\subsection{quantization clip-floor-shift activation function}
We propose the quantization clip-floor-shift activation function to train ANNs. 
\begin{align}
    \label{annnew2}
	\bm{a}^{l} = \widehat{h}(\bm{z}^{l})=\lambda^l~\mathrm{clip} \left(\frac{1}{L}\left \lfloor \frac{\bm{z}^{l}L}{\lambda^l}+ \bm{\varphi} \right \rfloor, 0, 1 \right).
\end{align}
Compared with \Eqref{annnew}, there exists a hyperparameter vector $\bm{\varphi}$ that controls the shift of the activation function. When $L \ne T$, we cannot guarantee the conversion error is 0. However, we can estimate the expectation of conversion error.
Similar to~\citep{deng2020optimal}, we assume that $z_i^l$ is uniformly distributed within intervals $[(t-1)\lambda^l/T, (t)\lambda^l/T]$ and $[(l-1)\lambda^l/L, (l)\lambda^l/L]$ for $t=1,2,...,T$ and $L=1,2,...,L$, we have the following Theorem.


 \begin{theorem}
\label{t2}
An ANN with activation function (\ref{annnew2}) is converted to an SNN with the same weights. If $\theta^l=\lambda^l$, $\bm{v}^{l}(0)=\theta^l\bm{\varphi}$, 
then for arbitrary $T$ and $L$, the expectation of conversion error reaches $\bm{0}$ when the shift term $\bm{\varphi}$ in source ANN is $\bm{\frac{1}{2}}$.
\begin{align}
\forall\ T,L \quad
\left.\mathbb{E}_z \left ( {\widetilde{\bm{Err}}}^l \right ) \right |_{\bm{\varphi} = \bm{\frac{1}{2}}} &= \bm{0}. 
\end{align}
 \end{theorem}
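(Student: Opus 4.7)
The plan is to reduce the theorem to a statement about the quantization error of a round-to-nearest operator on a uniformly distributed input. Under the conditions $\theta^l = \lambda^l$ and $\bm{v}^l(0) = \lambda^l \bm{\varphi}$, the estimated SNN output from equation (\ref{snn_est}) becomes
\begin{align*}
\bm{\phi}^l(T) \approx \lambda^l \, \mathrm{clip}\left(\frac{1}{T}\left\lfloor \frac{\bm{z}^l T}{\lambda^l} + \bm{\varphi} \right\rfloor, 0, 1\right),
\end{align*}
which has the same functional form as the ANN activation (\ref{annnew2}) except that $T$ replaces $L$. Subtracting the ANN output (\ref{annnew2}) from this expression, the estimated conversion error in equation (\ref{err_errornew}) becomes a componentwise difference of two quantization clip-floor-shift functions, both evaluated on the same input $\bm{z}^l$ with the same shift $\bm{\varphi}$. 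So it suffices to prove, componentwise, that $\mathbb{E}[f_T(z) - f_L(z)] = \bm{0}$ when $\varphi = \tfrac{1}{2}$, where $f_N(z) = \lambda \, \mathrm{clip}(\tfrac{1}{N}\lfloor zN/\lambda + \varphi\rfloor, 0, 1)$.

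The core observation I would use is that a floor preceded by a $+\tfrac{1}{2}$ shift is exactly round-to-nearest, whose quantization error has zero mean against a uniform input. Concretely, on the $k$-th cell $z \in [(k-1)\lambda/T, k\lambda/T]$, the value $zT/\lambda + \tfrac{1}{2}$ lies in $[k-\tfrac{1}{2}, k+\tfrac{1}{2}]$, so $\lfloor\cdot\rfloor$ equals $k-1$ on the lower half of the cell and $k$ on the upper half. Under the uniform assumption on each cell, $\mathbb{E}[f_T(z)]$ on that cell equals the midpoint $(k-\tfrac{1}{2})\lambda/T$, which coincides with $\mathbb{E}[z]$ on the same cell. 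Summing over cells gives $\mathbb{E}[f_T(z)] = \mathbb{E}[z]$; the identical argument with $L$-cells yields $\mathbb{E}[f_L(z)] = \mathbb{E}[z]$. Subtracting cancels $\mathbb{E}[z]$ and yields $\mathbb{E}[\widetilde{\bm{Err}}^l] = \bm{0}$.

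A small bookkeeping step is to check that the $\mathrm{clip}$ does not spoil the midpoint identity at the boundaries. On the last cell ($k=T$) the upper half rounds to $T$, which after dividing by $T$ and clipping to $[0,1]$ still gives $1 = T\lambda/(T\lambda)$, i.e. the value the unclipped formula would produce; similarly the first cell behaves correctly because $0$ is not clipped either. Hence the cellwise midpoint argument survives the clip for every cell in $[0,\lambda]$, and the same goes for the $L$-quantization. The step I expect to be the main obstacle is making the distributional assumption consistent for both quantizations simultaneously: the paper assumes $z$ is locally uniform on both the $T$-cells and the $L$-cells, which is slightly informal. I would formalize it by simply taking $z$ uniform on $[0,\lambda]$, so that conditional on any $T$-cell or $L$-cell it is uniform there; under this reading the cellwise identity $\mathbb{E}[f_N(z)] = \mathbb{E}[z]$ assembles directly into the claimed zero expectation, independently of the relative sizes of $T$ and $L$.
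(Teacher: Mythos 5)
Your proposal is correct and follows essentially the same route as the paper: both insert and subtract $z$ so that the error splits into two round-to-nearest quantization errors (one at resolution $T$, one at resolution $L$), each of which has zero mean under the uniformity assumption. The only difference is bookkeeping — the paper's Lemma~\ref{l1} integrates over rounding cells centered at the quantization levels and needs the boundary condition $p_0=p_T$, whereas you take $z$ uniform on $[0,\lambda^l]$ and use the midpoint identity on the natural cells $[(k-1)\lambda^l/N,\,k\lambda^l/N]$, which is an equivalent (and arguably cleaner) formalization of the same informal distributional assumption.
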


The proof is in the Appendix. Theorem 2 indicates that the shift term $\bm{\frac{1}{2}}$ is able to optimize the expectation of conversion error. 
By comparing Figure~\ref{fig2}b and  Figure~\ref{fig2}c, we can find that when
the shift term $\bm{\varphi=0.5}$ is added, the mean conversion error reaches zero, even though $L \ne T$. These results indicate we can achieve high-performance converted SNN at ultra-low time-steps.

$L$ is the only undetermined hyperparameter of the quantization clip-floor-shift activation. When $T=L$, the conversion error reaches zero. So we naturally think that the parameter $L$ should be set as small as possible to get better performance at low time-steps. 
However, a too low quantization of the activation function will decrease the model capacity and further lead to accuracy loss when the time-steps is relatively large. 
Choosing the proper $L$ is a trade-off between the accuracy at low latency and the best accuracy of SNNs. We will further analyze the effects of quantization steps $L$ in the experiment section.

\subsection{algorithm for training quantization clip-floor-shift activation function}
Training an ANN with quantization clip-floor-shift activation instead of ReLU is also a tough problem. To direct train the ANN, we use the straight-through estimator~\citep{bengio2013estimating} for the derivative of the floor function, that is $\frac{\mathrm{d}\lfloor x\rfloor}{\mathrm{d}x}=1$. The overall derivation rule is given in \Eqref{train2}. 
\begin{align}
    \frac{\partial \widehat{h}_i(\bm{z}^l)}{\partial z_i^l} &= 
        \begin{cases}
            1,&-\frac{\lambda^l}{2L}< z_i^l <\lambda^l-\frac{\lambda^l}{2L} \\
            0,&\mathrm{otherwise}
        \end{cases},
    \frac{\partial \widehat{h}_i(\bm{z}^l)}{\partial \lambda^l} &=
        \begin{cases}
            \frac{\widehat{h}_i(\bm{z}^l)-z^l_i}{\lambda^l}, &-\frac{\lambda^l}{2L} \leqslant z^l_i <\lambda^l-\frac{\lambda^l}{2L} \\
            0,& z^l_i < -\frac{\lambda^l}{2L}  \\
            1,& z^l_i \geqslant \lambda^l-\frac{\lambda^l}{2L}
        \end{cases}
        \label{train2}
\end{align}
Here $z_i^l$ is the i-th element of $\bm{z}^l$. Then we can train the ANN with quantization clip-floor-shift activation using Stochastic Gradient Descent algorithm~\citep{bottou2012stochastic}. 
\section{Related Work}



The study of ANN-SNN conversion is first launched by~\cite{cao2015spiking}. Then~\cite{diehl2015fast} converted a three-layer CNN to an SNN using data-based and model-based normalization. To obtain high-performance SNNs for complex datasets and deeper networks,~\cite{rueckauer2016theory} and~\cite{sengupta2019going} proposed more accurate scaling methods to normalize weights and scale thresholds respectively, which were later proved to be equivalent~\citep{ding2021optimal}. Nevertheless, the converted deep SNN requires hundreds of time steps to get accurate results due to the conversion error analyzed in Sec.~\ref{sec:conversion_error}. To address the potential information loss,~\cite{rueckauer2016theory} and~\cite{han2020rmp} suggested using ``reset-by-subtraction'' neurons rather than ``reset-to-zero'' neurons. Recently, many methods have been proposed to eliminate the conversion error. \cite{rueckauer2016theory}~recommended 99.9\% percentile of activations as scale factors, and~\cite{ho2020tcl} added the trainable clipping layer. Besides, \cite{han2020rmp} rescaled the SNN thresholds to avoid the improper activation of spiking neurons. \cite{massa2020efficient} and \cite{singh2021gesture} evaluated the performance of converted SNNs on the Loihi Neuromorphic Processor.
Our work share similarity with \cite{deng2020optimal,li2021free}, which also shed light on the conversion error. \cite{deng2020optimal} minimized the layer-wise error by introducing extra bias in addition to the converted SNN biases. \cite{li2021free} further proposed calibration for weights and biases using quantized fine-tuning. They got good results with 16 and 32 time-steps without trails for more extreme time-steps. In comparison, our work aims to fit ANN into SNN with techniques eliminating the mentioned conversion error. The end-to-end training of quantization layers is implemented to get better overall performance. Our shift correction can lead to a single SNN which performs well at both ultra-low and large time-steps.
Maintaining SNN performance within extremely few time-steps is difficult even for supervised learning methods like backpropagation through time (BPTT). BPTT usually requires fewer time-steps because of thorough training, yet at the cost of heavy GPU computation~\citep{wu2018STBP, wu2019direct,lee2016training,neftci2019surrogate,lee2020enabling,zenke2021remarkable}. 
The timing-based backpropagation methods~\citep{bohte2002error,tavanaei2019deep,kim2020unifying} could train SNNs over a very short temporal window, e.g. over 5-10 time-steps. However, they are usually limited to simple datasets like MNIST~\citep{kheradpisheh2020temporal} and CIFAR10~\citep{zhang2020temporal}.
\cite{rathi2019enabling} shortened simulation steps by initializing SNN with conversion method and then tuning SNN with STDP. In this paper, the proposed method achieves high-performance SNNs with ultra-low latency (4 time-steps).

\section{Experiments}
In this section, we validate the effectiveness of our method and compare our method with other state-of-the-art approaches for image classification tasks on CIFAR-10~\citep{lecun1998gradient}, CIFAR-100~\citep{krizhevsky2009learning}, and ImageNet datasets~\citep{deng2009imagenet}.
Similar to previous works, we utilize VGG-16~\citep{simonyan2014very}, ResNet-18~\citep{he2016deep}, and ResNet-20 network structures for source ANNs.  
We compare our method with the state-of-the-art ANN-SNN conversion methods, including Hybrid-Conversion (HC) from \cite{rathi2019enabling}, RMP from \cite{han2020rmp}, TSC from \cite{han2020deep}, RNL from \cite{ding2021optimal}, ReLUThresholdShift (RTS) from \cite{deng2020optimal}, and SNN Conversion with Advanced Pipeline  (SNNC-AP) from \cite{li2021free}. Comparison with different SNN training methods is also included to manifest the superiority of low latency inference, including HybridConversion-STDB (HC-STDB) from \cite{rathi2019enabling}, STBP from \cite{wu2018STBP}, DirectTraining  (DT) from \cite{wu2019direct}, and TSSL from \cite{zhang2020temporal}. 
The details of the proposed ANN-SNN algorithm and training configurations are provided in the Appendix.


\subsection{Test accuracy of ANN with quantization clip-floor-shift activation}


\begin{figure}[t] 
\centering
\includegraphics[width=0.23\textwidth]{./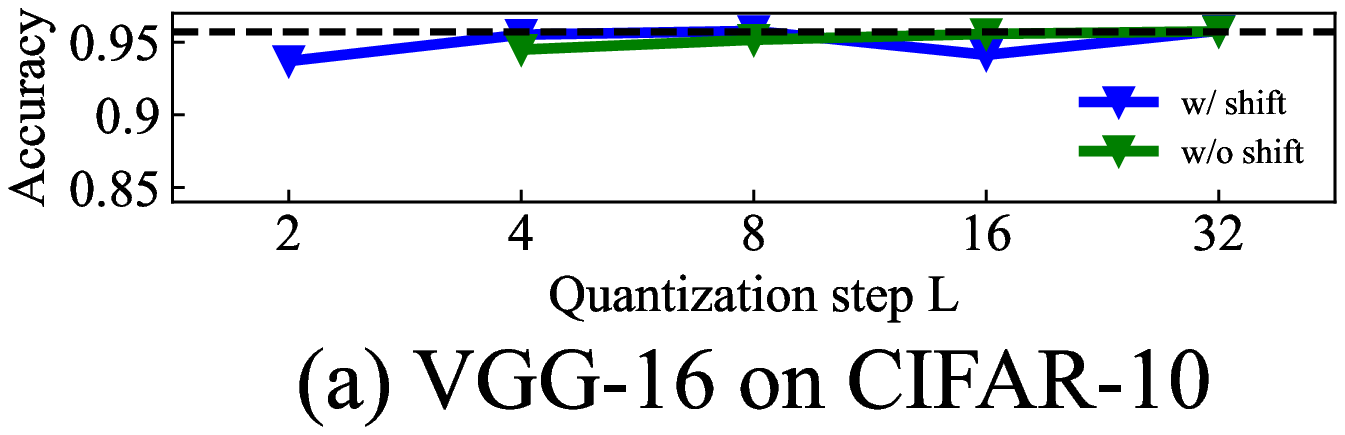}
\includegraphics[width=0.23\textwidth]{./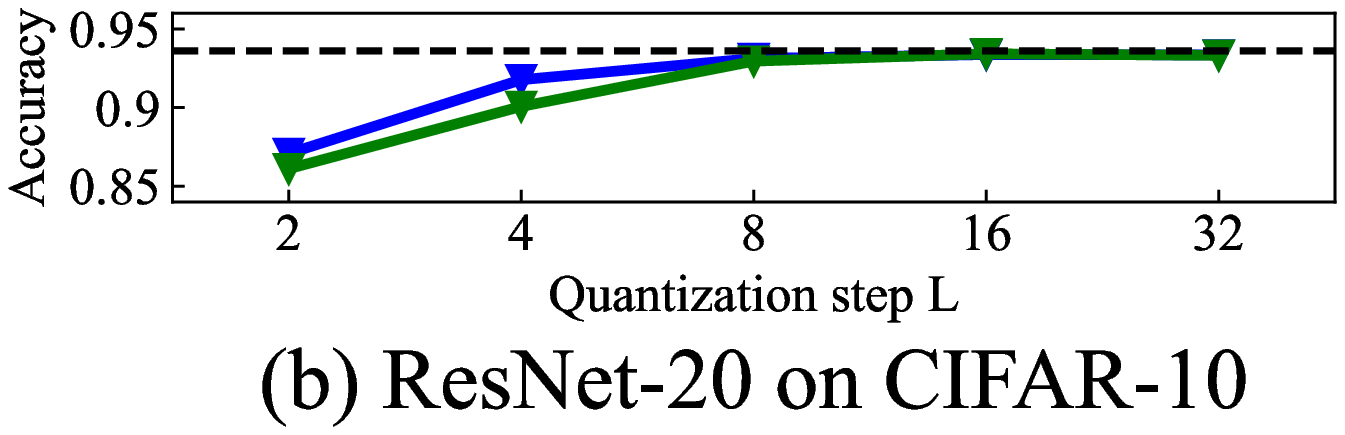}
\includegraphics[width=0.23\textwidth]{./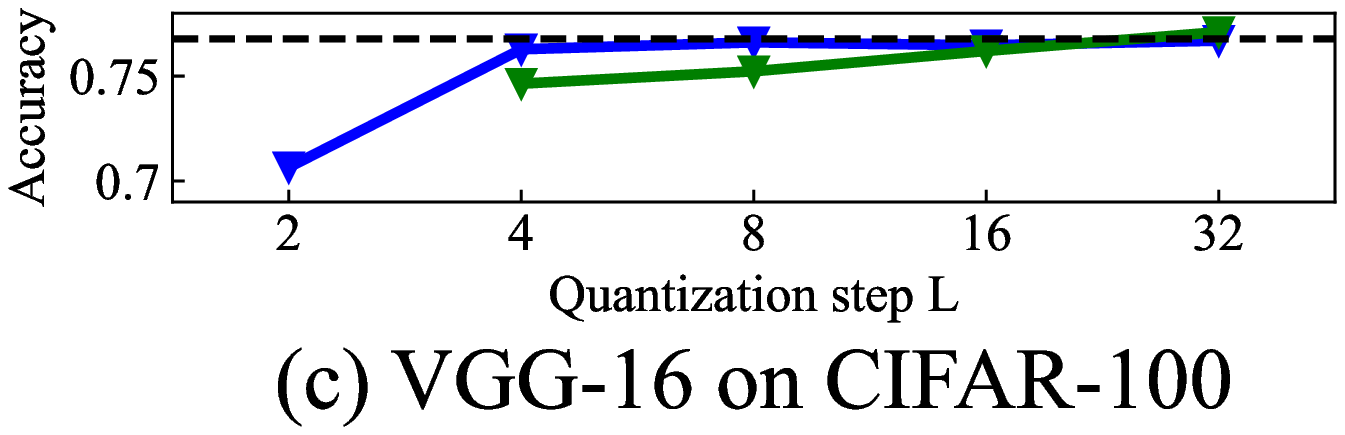}
\includegraphics[width=0.23\textwidth]{./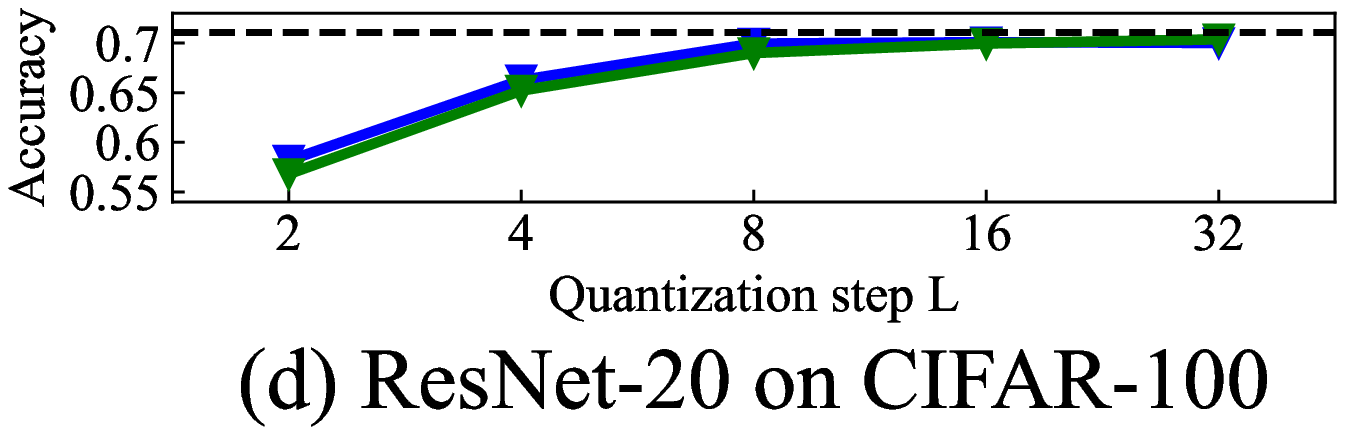}
\caption{Compare ANNs accuracy.}
\label{fig3} 
\end{figure}

We first compare the performance of ANNs with quantization clip-floor activation (green curve), ANNs with quantization clip-floor-shift activation (blue curve), and original ANNs with ReLU activation (black dotted line). Figure~\ref{fig3}(a)-(d) report the results about VGG-16 on CIFAR-10, ResNet-20 on CIFAR-10, VGG-16 on CIFAR-100 and ResNet-20 on CIFAR-100. The performance of ANNs with quantization clip-floor-shift activation is better than ANNs with quantization clip-floor activation. These two ANNs can achieve the same performance as original ANNs with ReLU activation when $L>4$.
These results demonstrate that our quantization clip-floor-shift activation function hardly affects the performance of ANN.

\subsection{Comparison with the state-of-the-art}

Table~\ref{tab:acc_convert_cifar10} compares our method with the state-of-the-art ANN-SNN conversion methods on CIFAR-10. 
As for low latency inference (T $\leq64$), our model outperforms all the other methods with the same time-step setting. For T $=32$, the accuracy of our method is slightly better than that of ANN (95.54\% vs. 95.52\%), whereas RMP, RTS, RNL, and SNNC-AP methods have accuracy loss of 33.3\%, 19.48\%, 7.42\%, and 2.01\%. Moreover, we achieve an accuracy of 93.96\% using only 4 time-steps, which is 8 times faster than SNNC-AP that takes 32 time-steps. For ResNet-20, we achieve an accuracy of 83.75\% with 4 time-steps. Notably, our ultra-low latency performance is comparable with other state-of-the-art supervised training methods, which is shown in Table~\ref{tab_comp_other_method}  of the Appendix. 

We further test the performance of our method on the large-scale dataset. Table~\ref{tab_acc_convert_imagenenew} reports the results on ImageNet, our method also outperforms the others both in terms of high accuracy and 
ultra-low latency.  For ResNet-34, the accuracy of the proposed method is 4.83\% higher than SNNC-AP and 69.28\% higher than RTS when $T=32$. When the time-steps is 16, we can still achieve an accuracy of 59.35\%. 
 For VGG-16, the accuracy of the proposed method is 4.83\% higher than SNNC-AP and 68.356\% higher than RTS when $T=32$. When the time-steps is 16, we can still achieve an accuracy of 50.97\%. 
These results demonstrate that our method outperforms the previous conversion methods.
More experimental results on CIFAR-100 is in Table~\ref{tab_acc_convert_cifar100}  of the Appendix.

\begin{figure}[t] 
\centering
\includegraphics[width=0.23\textwidth]{./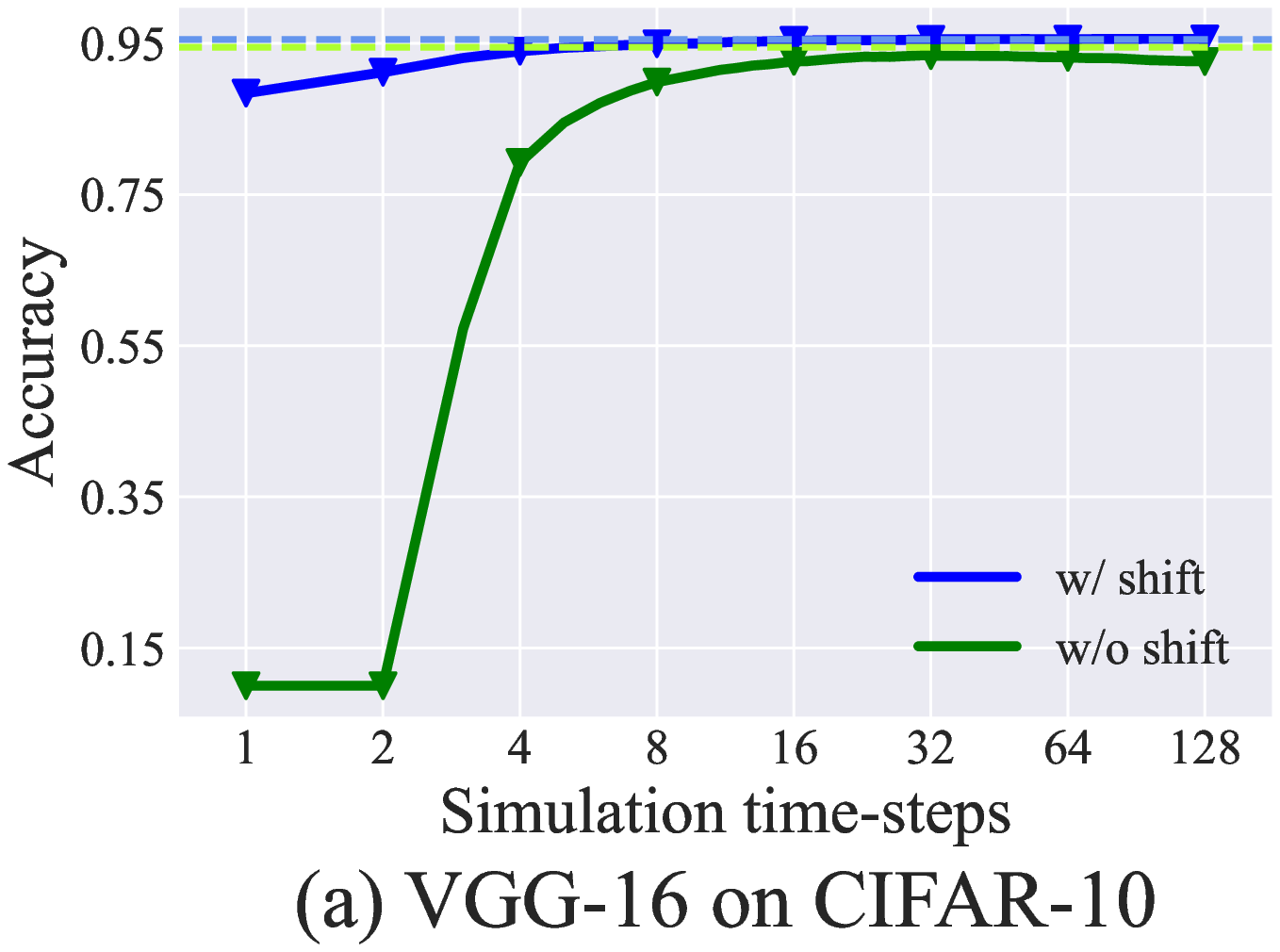}
\includegraphics[width=0.23\textwidth]{./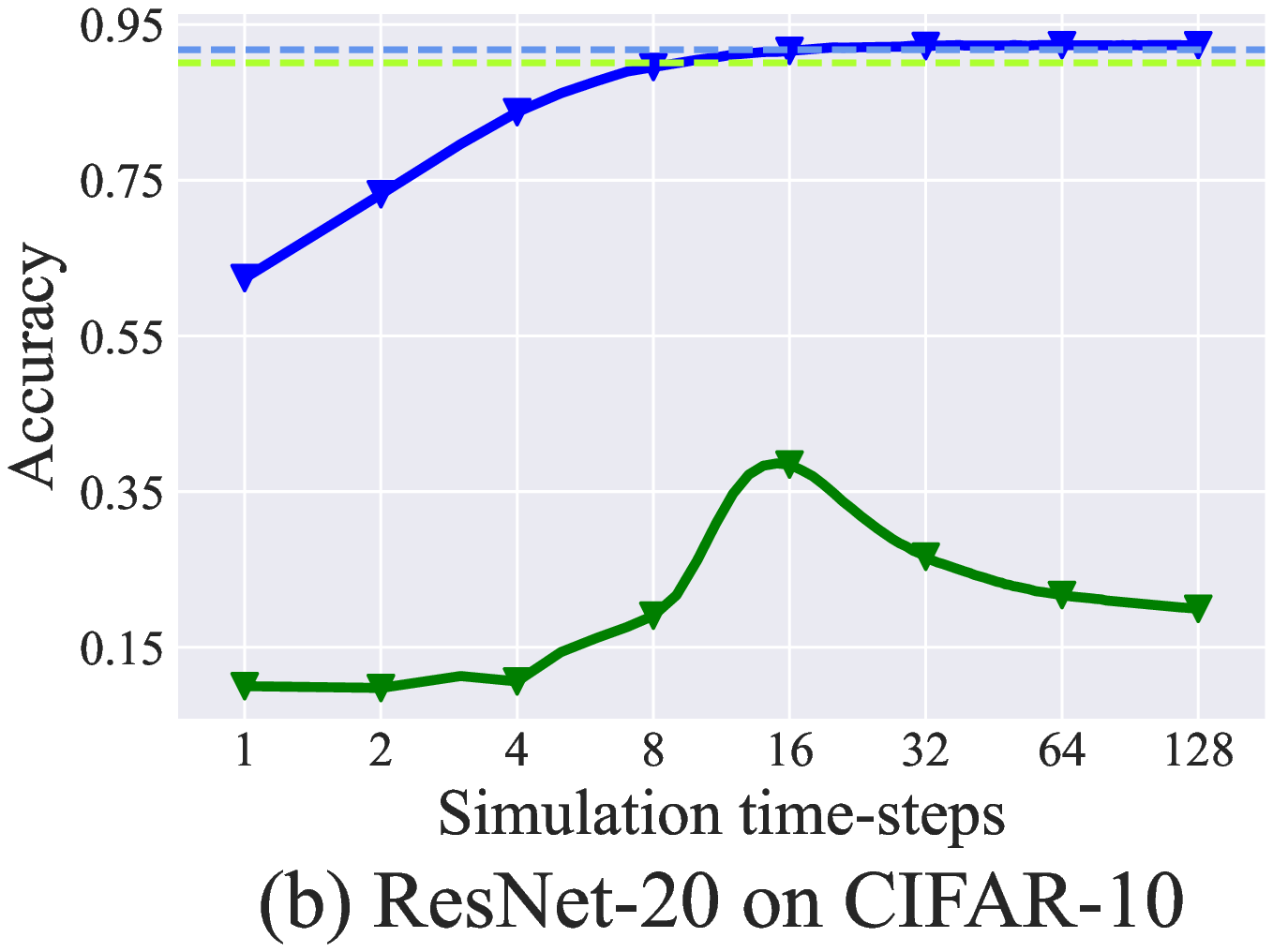}
\includegraphics[width=0.23\textwidth]{./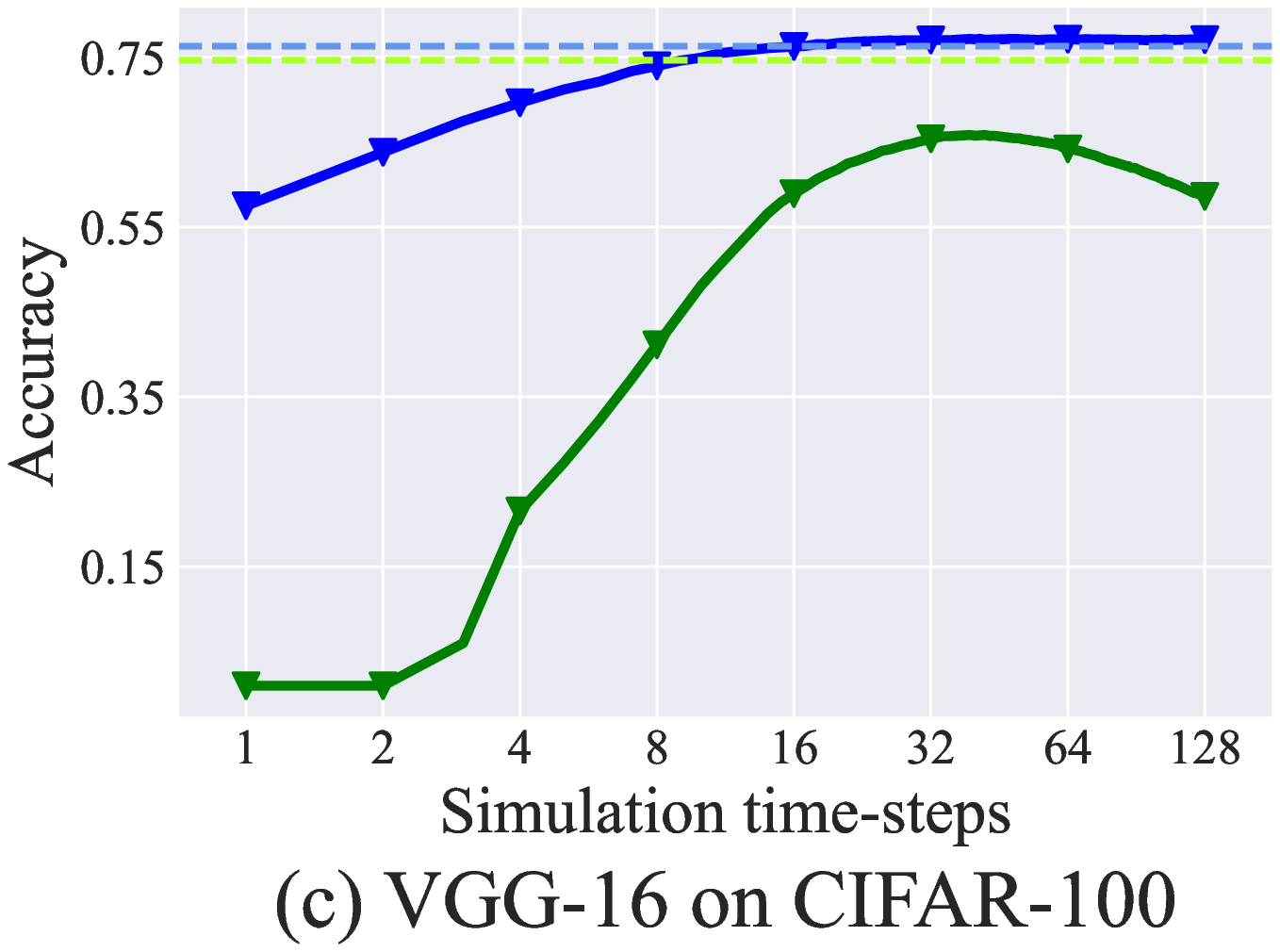}
\includegraphics[width=0.23\textwidth]{./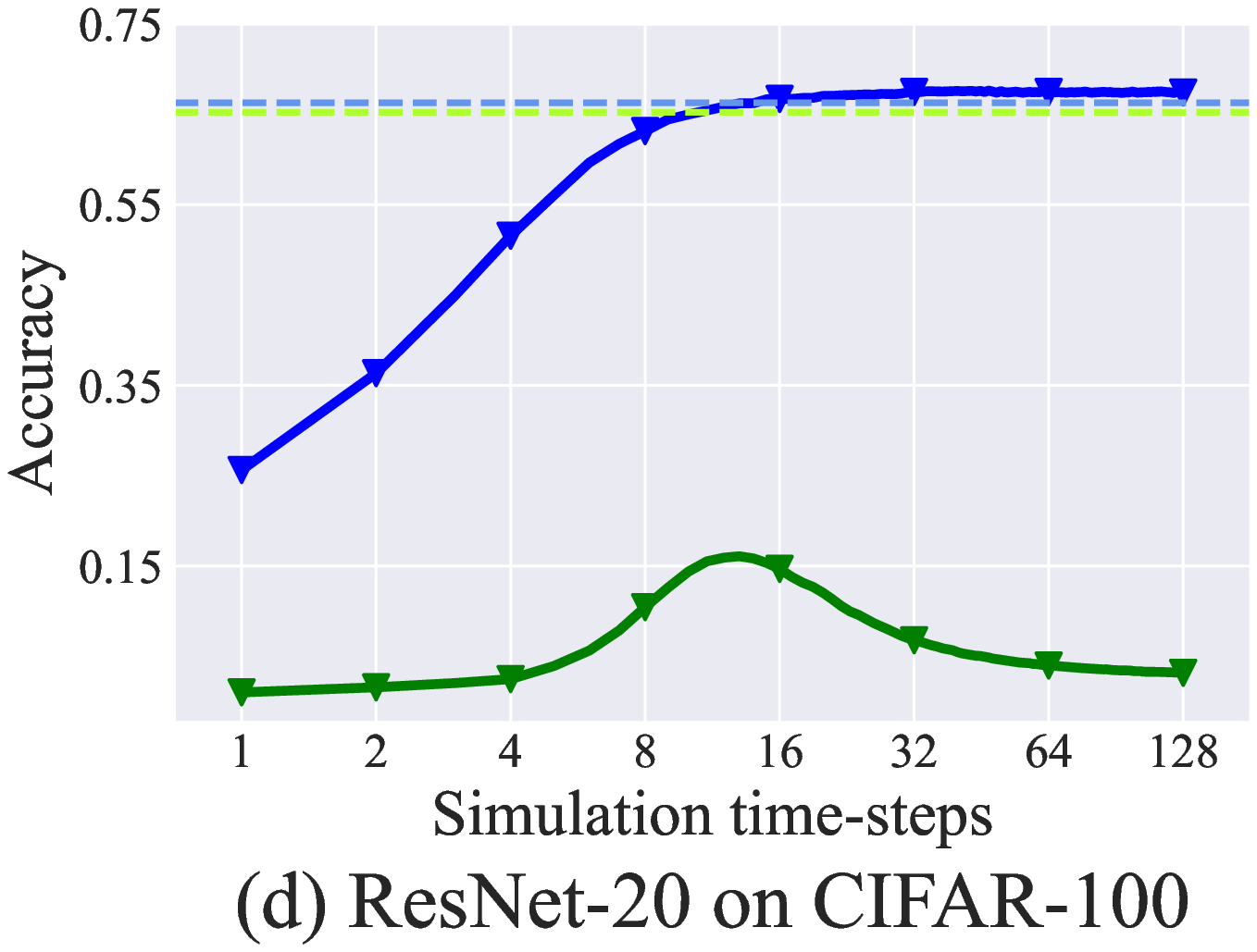}
\caption{Compare quantization clip-floor activation with/without shift term}
\label{fig4} 
\end{figure}

\begin{table}[t]
    \caption{ Comparison between the proposed method and previous works on CIFAR-10 dataset.}
    \label{tab:acc_convert_cifar10}
\centering
\renewcommand\arraystretch{1.2}
\scalebox{0.80}
{
\begin{threeparttable}
\begin{tabular}{@{}clllllllll@{}}\hline
Architecture & Method                & ANN & T=2  & T=4  & T=8  & T=16  & T=32 & T=64 & T$\geq$512\\ \hline
\multirow{6}{*}{VGG-16}  
&RMP  & 93.63\%    & -     & -     & -     & -     & 60.30\%  & 90.35\%  & 93.63\%               \\ \cline{2-10}
&TSC  & 93.63\%    & -     & -     & -     & -     & -      & 92.79\%  & 93.63\%              \\ \cline{2-10}
&RTS       & 95.72\%    & -     & -     & -     & -     & 76.24\%  & 90.64\%  & 95.73\%              \\ \cline{2-10} 
&RNL  & 92.82\%    & -     & -  & -  & 57.90\% & 85.40\% & 91.15\% & 92.95\%              \\ \cline{2-10}
&SNNC-AP  & 95.72\%    & -     & -     & -  & -  & 93.71\% & 95.14\% & 95.79\%              \\ \cline{2-10}
&\textbf{Ours}            & 95.52\% & 91.18\% & 93.96\% & 94.95\% & 95.40\% & 95.54\% & 95.55\% & 95.59\%          \\ \hline
\multirow{3}{*}{ResNet-20}
&RMP   & 91.47\%    & -     & -     & -     & -     & -     & -    & 91.36\%         \\ \cline{2-10}
&TSC  & 91.47\%    & -     & -     & -     & -     & -     & 69.38\%   & 91.42\%      \\ \cline{2-10}
&\textbf{Ours}                           & 91.77\%    & 73.20\% & 83.75\% & 89.55\% & 91.62\% & 92.24\% & 92.35\%   & 92.41\%       \\ \hline
\multirow{3}{*}{ResNet-18}
&RTS \tnote{1}  & 95.46\%    & -     & -     & -     & -     & 84.06\%   & 92.48\%    & 94.42\%  \\ \cline{2-10}
&SNNC-AP \tnote{1} & 95.46\%    & -     & -     & -     & -     & 94.78\%   & 95.30\%    & 95.45\%  \\ \cline{2-10}
&\textbf{Ours}     & 96.04\%    & 75.44\% & 90.43\% & 94.82\% & 95.92\% & 96.08\% & 96.06\%    & 96.06\%          \\ \hline
\end{tabular}
\begin{tablenotes}
					\footnotesize
					\item[1] RTS and SNNC-AP use altered ResNet-18, while ours use standard ResNet-18.
				\end{tablenotes}
\end{threeparttable}
}
\end{table}

\begin{figure}[t] 
\centering
\includegraphics[width=0.23\textwidth]{./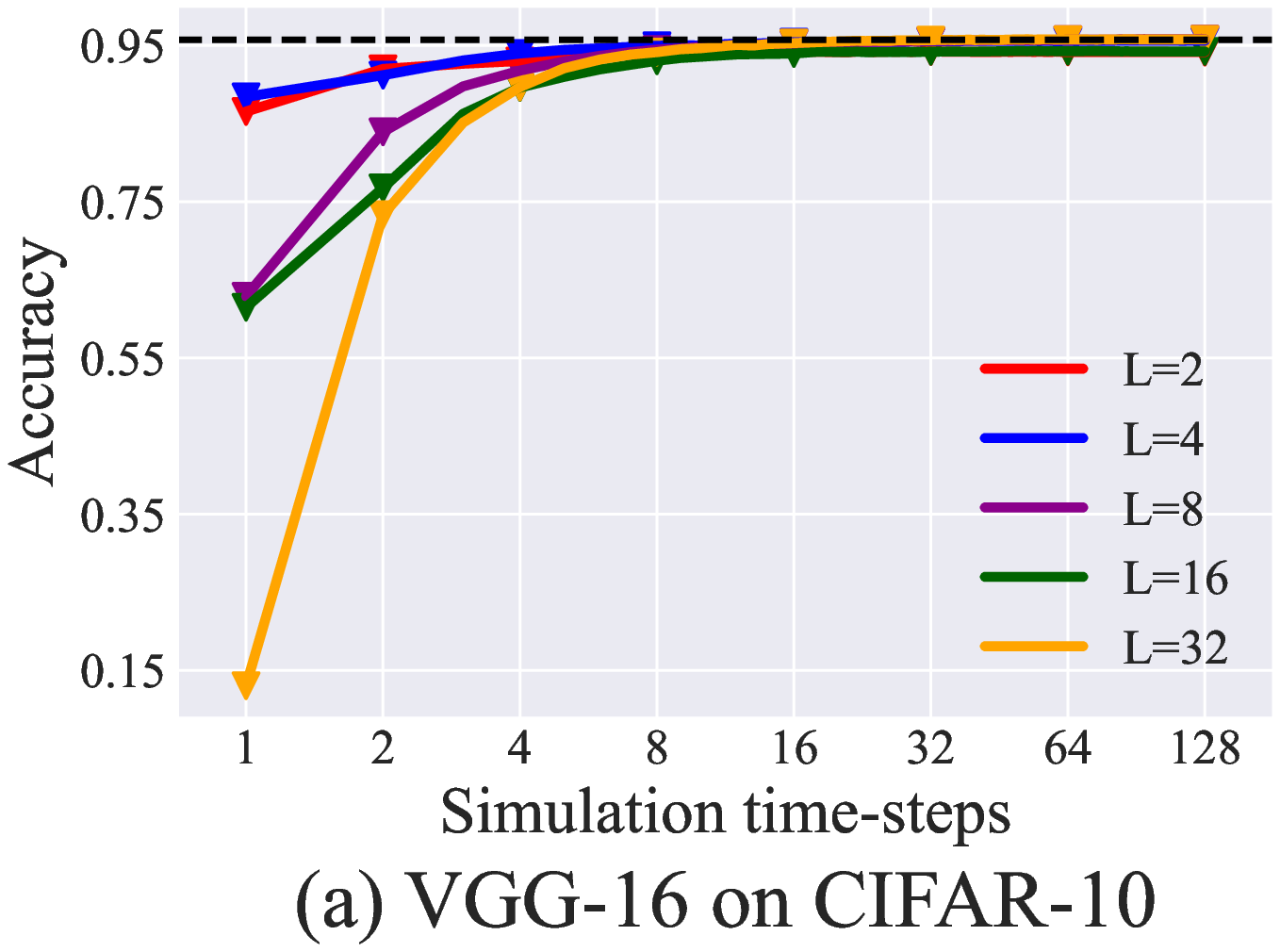} \includegraphics[width=0.23\textwidth]{./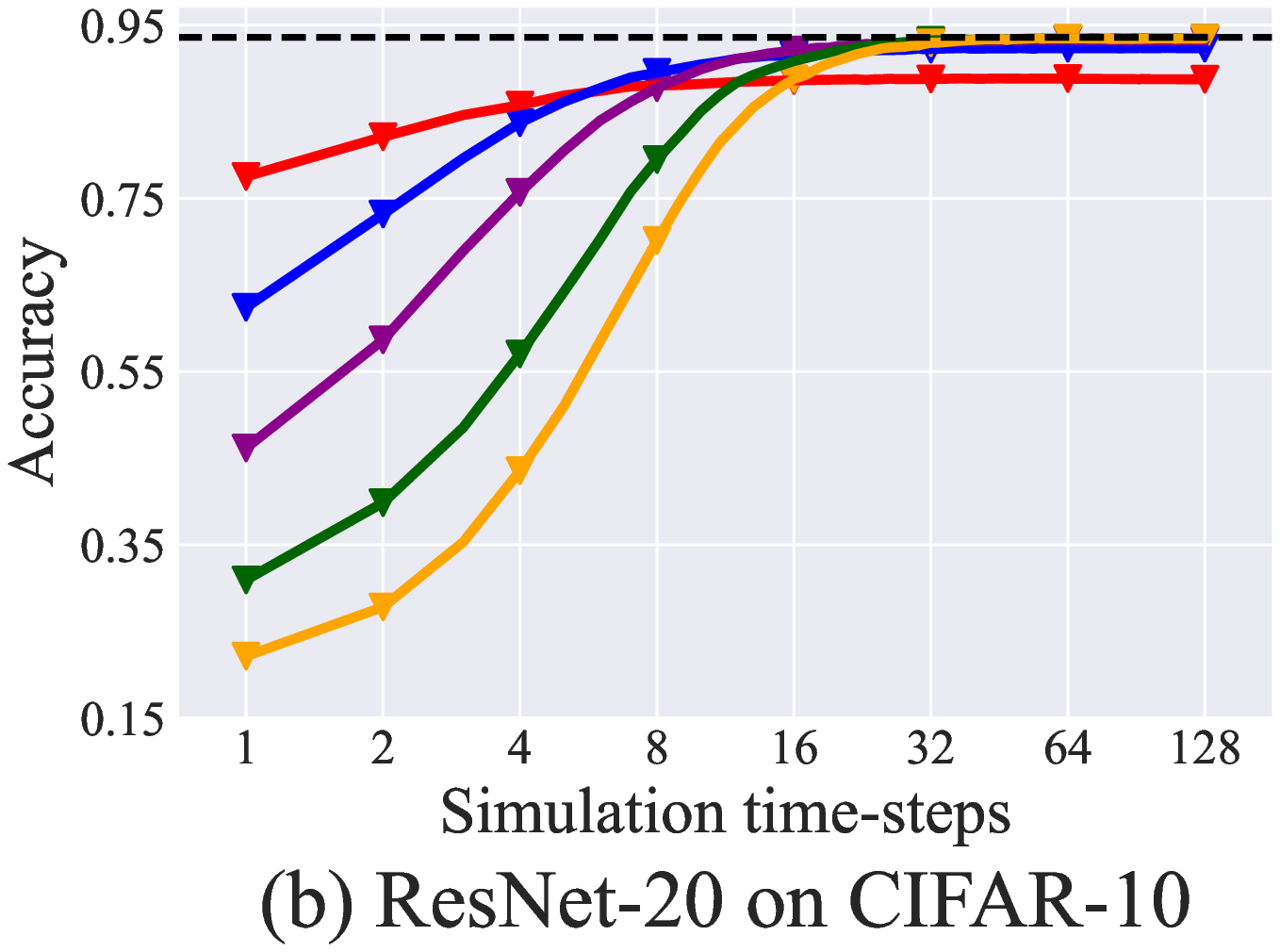} \includegraphics[width=0.23\textwidth]{./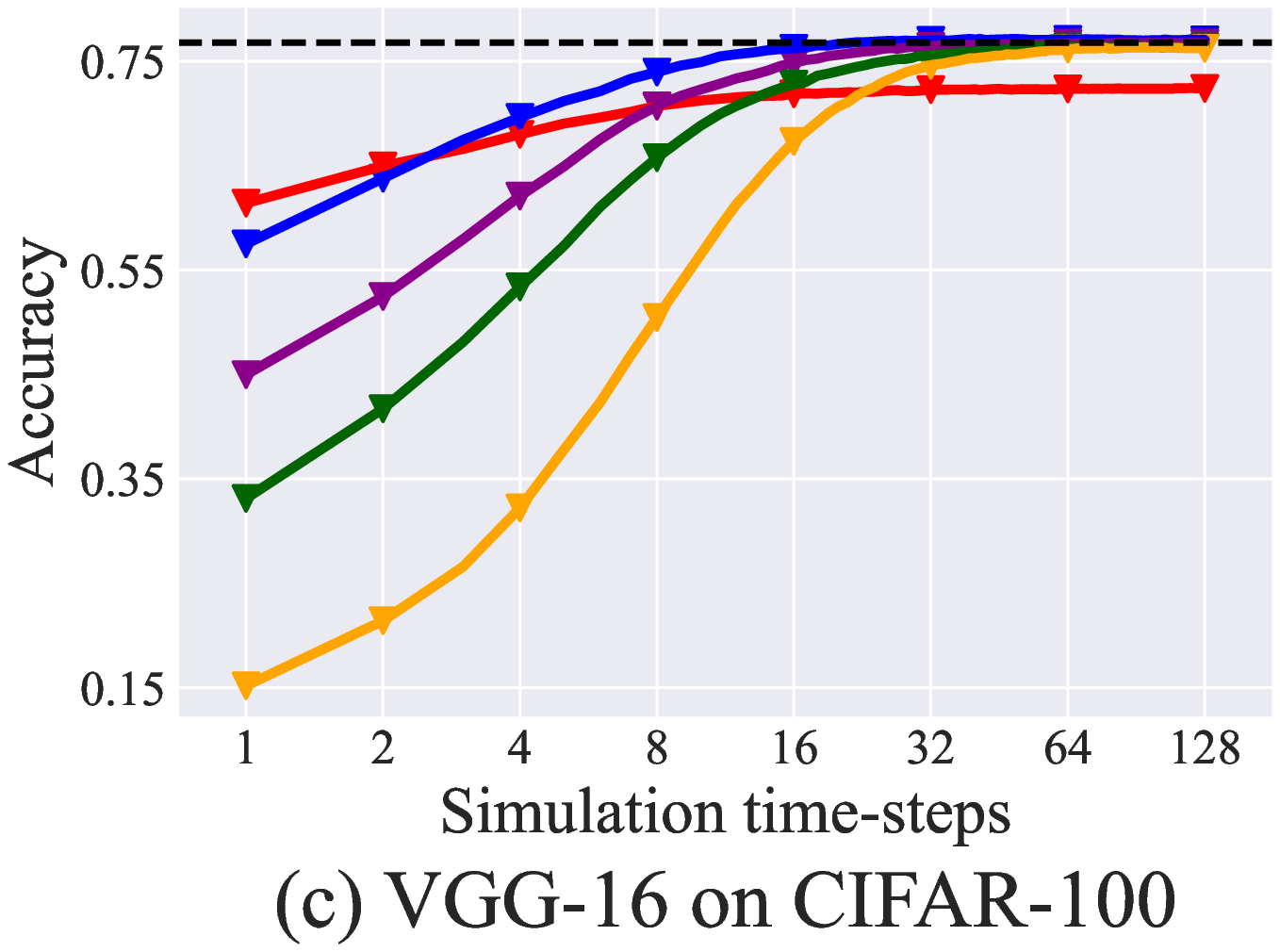}
\includegraphics[width=0.23\textwidth]{./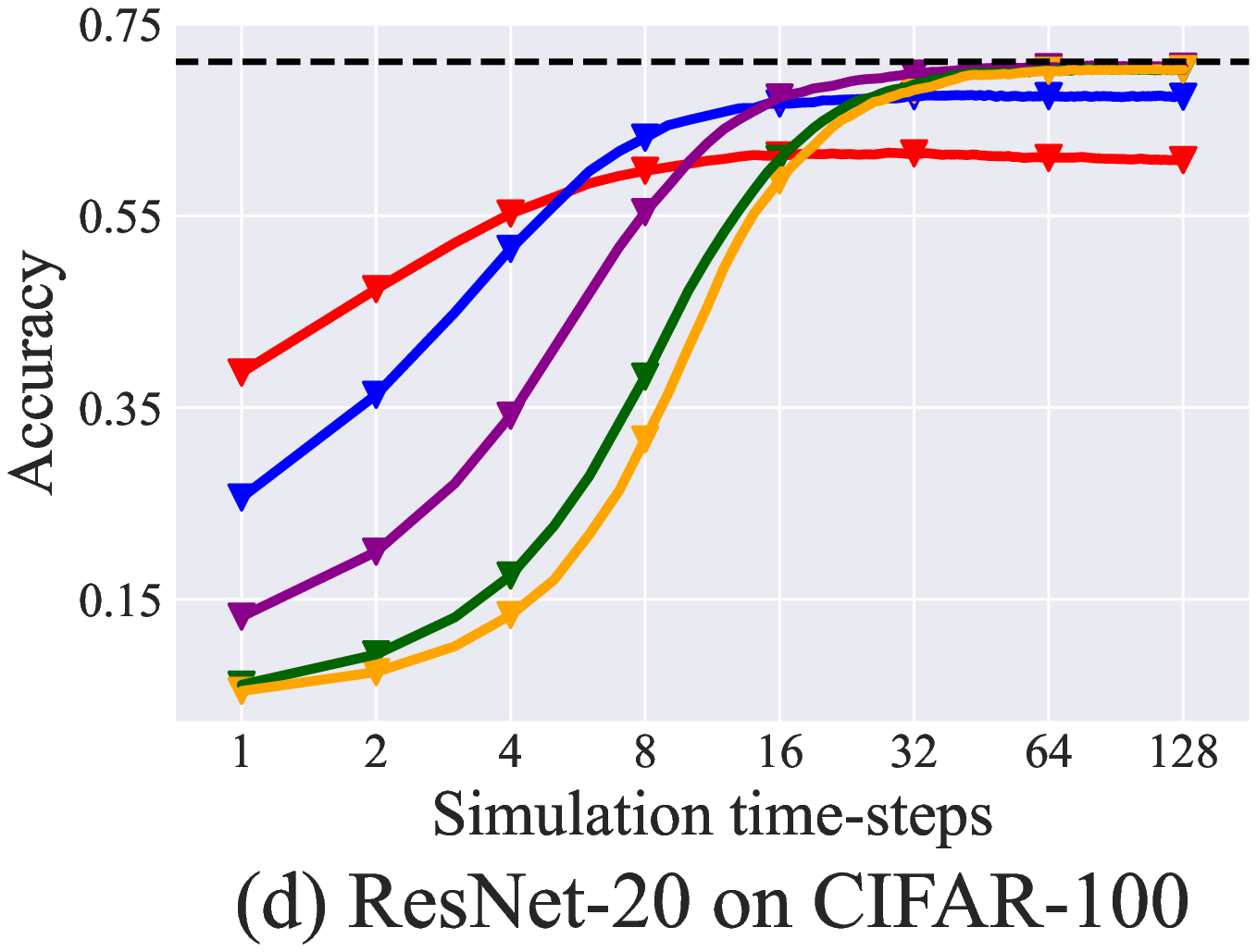}
\caption{Influence of different quantization steps}
\label{fig5} 
\end{figure}

\begin{table}[t]
    \caption{Comparison between the proposed method and previous works on ImageNet dataset.}
    \label{tab_acc_convert_imagenenew}
\centering
\renewcommand\arraystretch{1.2}
\scalebox{0.90}
{
\begin{threeparttable}
\begin{tabular}{@{}cllllllll@{}} \hline
Architecture & Method & ANN  & T=16  & T=32 & T=64 & T=128 & T=256 & T$\geq$1024 \\ \hline
\multirow{5}{*}{ResNet-34}
&RMP    & 70.64\%     & -     & -     & -     & -    & -   & 65.47\%               \\ \cline{2-9}
&TSC        & 70.64\%     & -     & -     & -     & -    &  61.48\%   &  65.10\%               \\ \cline{2-9}
&RTS       & 75.66\%     & -     &  0.09\%     & 0.12\%    & 3.19\%  & 47.11\%   & 75.08\%               \\ \cline{2-9}
&SNNC-AP       & 75.66\%     & -     & 64.54\%     & 71.12\%    & 73.45\%   & 74.61\%   & 75.45\%               \\ \cline{2-9}
&\textbf{Ours}  & 74.32\%  & 59.35\% & 69.37\% & 72.35\% & 73.15\% & 73.37\% & 73.39\% \\ \hline
\multirow{5}{*}{VGG-16}
&RMP    & 73.49\%     & -     & -     & -     & -    & 48.32\%   & 73.09\%               \\ \cline{2-9}
&TSC        & 73.49\%     & -     & -     & -     & -    &  69.71\%   &  73.46\%               \\ \cline{2-9}
&RTS       & 75.36\%     & -     &  0.114\%     & 0.118\%    & 0.122\%  & 1.81\%   & 73.88\%               \\ \cline{2-9}
&SNNC-AP       & 75.36\%     & -     & 63.64\%     & 70.69\%    & 73.32\%   & 74.23\%   & 75.32\%               \\ \cline{2-9}
&\textbf{Ours} & 74.29\%   & 50.97\%  & 68.47\% & 72.85\% & 73.97\% & 74.22\% & 74.32\%      \\ \hline
\end{tabular}
\end{threeparttable}
}
\end{table}

\subsection{Comparison of quantization clip-floor and  quantization clip-floor-shift}
Here we further compare the performance of SNNs converted from ANNs with quantization clip-floor activation and ANN with quantization clip-floor-shift activation.
In Sec.~\ref{sec:optimal}, we prove that the expectation of the conversion error reaches 0 with quantization clip-floor-shift activation, no matter whether $T$ and $L$ are the same or not.
To verify these, we set $L$ to 4 and train ANNs with quantization clip-floor activation and quantization clip-floor-shift activation, respectively. 
Figure~\ref{fig4} shows how the accuracy of converted SNNs changes with respect to the time-steps $T$. 
The accuracy of the converted SNN (green curve) from ANN with quantization clip-floor activation (green dotted line)  first increases and then decreases rapidly with the increase of time-steps, because we cannot guarantee that the conversion error is zero when $T$ is not equal to $L$. The best performance is still lower than source ANN (green dotted line).
In contrast, the accuracy of the converted SNN from ANN with quantization clip-floor-shift activation (blue curve) increases with the increase of $T$. It gets the same accuracy as source ANN (blue dotted line) when the time-steps is larger than 16.

\subsection{Effect of quantization steps L}
In our method, the quantization steps $L$ is a hyperparameter, which affects the accuracy of the converted SNN. To analyze the effect of $L$  and better determine the optimal value, we train VGG-16/ResNet-20 networks with quantization clip-floor-shift activation using different quantization steps L, including 2,4,8,16 and 32, and then converted them to SNNs. 
The experimental results on CIFAR-10/100 dataset are shown in Table~\ref{tab_quan} and Figure \ref{fig5}, where the black dotted line denotes the ANN accuracy and the colored curves represent the accuracy of the converted SNN. In order to balance the trade-off between low latency and high accuracy, we evaluate the performance of converted SNN mainly in two aspects. First, we focus on the SNN accuracy at ultra-low latency (within 4 time-steps). Second, we consider the best accuracy of SNN. It is obvious to find that the SNN accuracy at ultra-low latency decreases as $L$ increases. However, a too small $L$ will decrease the model capacity and further lead to accuracy loss.  When $L=2$, there exists a clear gap between the best accuracy of SNN and source ANN.
The best accuracy of SNN approaches source ANN when $L>4$. In conclusion, the setting of parameter $L$ mainly depends on the aims for low latency or best accuracy. The recommend quantization step $L$ is 4 or 8, which leads to high-performance converted SNN at both small time-steps and very large time-steps.

\section{Discussion and conclusion}
In this paper, we present ANN-SNN conversion method, enabling high-accuracy and ultra-low-latency deep SNNs. We propose the quantization clip-floor-shift activation to replace ReLU activation, which hardly affects the performance of ANNs and is closer to SNNs activation. Furthermore, we prove that the expected conversion error is zero, no matter whether the time-steps of SNNs and the quantization steps of ANNs is the same or not. We achieve state-of-the-art accuracy with fewer time-steps on CIFAR-10, CIFAR-100, and ImageNet datasets. Our results can benefit the implementations on neuromorphic hardware and pave the way for the large-scale application of SNNs.

Different from the work of \cite{deng2020optimal}, which adds the bias of the converted SNNs to shift the theoretical ANN-SNN curve to minimize the quantization error, we add the shift term in the quantization clip-floor activation function, and use this quantization clip-floor-shift function to train the source ANN. We show that the shift term can overcome the performance degradation problem when the time-steps and the quantization steps are not matched. Due to the unevenness error, there still exists a gap between ANN accuracy and SNN accuracy, even when $L=T$. Moreover, it is hard to achieve high-performance ANN-SNN conversion when the time-steps $T=1$. All these problems deserve further research. One advantage of conversion-based methods is that they can reduce the overall computing cost while maintaining comparable performance as source ANN. Combining the conversion-based methods and model compression may help significantly reduce the neuron activity and thus reduce energy consumptions without suffering from accuracy loss~\citep{kundu2021spike,rathi2021diet}, which is a promising direction.

\clearpage
\section*{Acknowledgement}
This work was supported by the National Natural Science Foundation of China under contracts No.62176003 and No.62088102.

\bibliography{iclr2022_conference}

\begin{thebibliography}{49}
\providecommand{\natexlab}[1]{#1}
\providecommand{\url}[1]{\texttt{#1}}
\expandafter\ifx\csname urlstyle\endcsname\relax
  \providecommand{\doi}[1]{doi: #1}\else
  \providecommand{\doi}{doi: \begingroup \urlstyle{rm}\Url}\fi

\bibitem[Bengio et~al.(2013)Bengio, L{\'e}onard, and
  Courville]{bengio2013estimating}
Yoshua Bengio, Nicholas L{\'e}onard, and Aaron Courville.
\newblock Estimating or propagating gradients through stochastic neurons for
  conditional computation.
\newblock \emph{arXiv preprint arXiv:1308.3432}, 2013.

\bibitem[Bohte et~al.(2002)Bohte, Kok, and La~Poutre]{bohte2002error}
Sander~M Bohte, Joost~N Kok, and Han La~Poutre.
\newblock Error-backpropagation in temporally encoded networks of spiking
  neurons.
\newblock \emph{Neurocomputing}, 48\penalty0 (1-4):\penalty0 17--37, 2002.

\bibitem[Bottou(2012)]{bottou2012stochastic}
L{\'e}on Bottou.
\newblock Stochastic gradient descent tricks.
\newblock In \emph{Neural networks: Tricks of the trade}, pp.\  421--436.
  Springer, 2012.

\bibitem[Cao et~al.(2015)Cao, Chen, and Khosla]{cao2015spiking}
Yongqiang Cao, Yang Chen, and Deepak Khosla.
\newblock Spiking deep convolutional neural networks for energy-efficient
  object recognition.
\newblock \emph{International Journal of Computer Vision}, 113\penalty0
  (1):\penalty0 54--66, 2015.

\bibitem[Cubuk et~al.(2019)Cubuk, Zoph, Mane, Vasudevan, and
  Le]{cubuk2019autoaugment}
Ekin~D Cubuk, Barret Zoph, Dandelion Mane, Vijay Vasudevan, and Quoc~V Le.
\newblock Autoaugment: Learning augmentation strategies from data.
\newblock In \emph{IEEE Conference on Computer Vision and Pattern Recognition},
  pp.\  113--123, 2019.

\bibitem[Davies et~al.(2018)Davies, Srinivasa, Lin, Chinya, Cao, Choday, Dimou,
  Joshi, Imam, Jain, et~al.]{davies2018loihi}
Mike Davies, Narayan Srinivasa, Tsung-Han Lin, Gautham Chinya, Yongqiang Cao,
  Sri~Harsha Choday, Georgios Dimou, Prasad Joshi, Nabil Imam, Shweta Jain,
  et~al.
\newblock Loihi: A neuromorphic manycore processor with on-chip learning.
\newblock \emph{IEEE Micro}, 38\penalty0 (1):\penalty0 82--99, 2018.

\bibitem[DeBole et~al.(2019)DeBole, Taba, Amir, Akopyan, Andreopoulos, Risk,
  Kusnitz, Otero, Nayak, Appuswamy, et~al.]{debole2019truenorth}
Michael~V DeBole, Brian Taba, Arnon Amir, Filipp Akopyan, Alexander
  Andreopoulos, William~P Risk, Jeff Kusnitz, Carlos~Ortega Otero, Tapan~K
  Nayak, Rathinakumar Appuswamy, et~al.
\newblock {T}rue{N}orth: Accelerating from zero to 64 million neurons in 10
  years.
\newblock \emph{Computer}, 52\penalty0 (5):\penalty0 20--29, 2019.

\bibitem[Deng et~al.(2009)Deng, Dong, Socher, Li, Li, and
  Fei-Fei]{deng2009imagenet}
Jia Deng, Wei Dong, Richard Socher, Li-Jia Li, Kai Li, and Li~Fei-Fei.
\newblock Imagenet: A large-scale hierarchical image database.
\newblock In \emph{IEEE Conference on Computer Vision and Pattern Recognition},
  pp.\  248--255. Ieee, 2009.

\bibitem[Deng \& Gu(2020)Deng and Gu]{deng2020optimal}
Shikuang Deng and Shi Gu.
\newblock Optimal conversion of conventional artificial neural networks to
  spiking neural networks.
\newblock In \emph{International Conference on Learning Representations}, 2020.

\bibitem[DeVries \& Taylor(2017)DeVries and Taylor]{devries2017improved}
Terrance DeVries and Graham~W Taylor.
\newblock Improved regularization of convolutional neural networks with cutout.
\newblock \emph{arXiv preprint arXiv:1708.04552}, 2017.

\bibitem[Diehl et~al.(2015)Diehl, Neil, Binas, Cook, Liu, and
  Pfeiffer]{diehl2015fast}
Peter~U Diehl, Daniel Neil, Jonathan Binas, Matthew Cook, Shih-Chii Liu, and
  Michael Pfeiffer.
\newblock Fast-classifying, high-accuracy spiking deep networks through weight
  and threshold balancing.
\newblock In \emph{International Joint Conference on Neural Networks}, pp.\
  1--8, 2015.

\bibitem[Ding et~al.(2021)Ding, Yu, Tian, and Huang]{ding2021optimal}
Jianhao Ding, Zhaofei Yu, Yonghong Tian, and Tiejun Huang.
\newblock Optimal ann-snn conversion for fast and accurate inference in deep
  spiking neural networks.
\newblock In \emph{International Joint Conference on Artificial Intelligence},
  pp.\  2328--2336, 2021.

\bibitem[Fang et~al.(2021)Fang, Yu, Chen, Huang, Masquelier, and
  Tian]{fang2021deep}
Wei Fang, Zhaofei Yu, Yanqi Chen, Tiejun Huang, Timoth{\'e}e Masquelier, and
  Yonghong Tian.
\newblock Deep residual learning in spiking neural networks.
\newblock \emph{arXiv preprint arXiv:2102.04159}, 2021.

\bibitem[Han \& Roy(2020)Han and Roy]{han2020deep}
Bing Han and Kaushik Roy.
\newblock Deep spiking neural network: Energy efficiency through time based
  coding.
\newblock In \emph{European Conference on Computer Vision}, pp.\  388--404,
  2020.

\bibitem[Han et~al.(2020)Han, Srinivasan, and Roy]{han2020rmp}
Bing Han, Gopalakrishnan Srinivasan, and Kaushik Roy.
\newblock {RMP-SNN}: Residual membrane potential neuron for enabling deeper
  high-accuracy and low-latency spiking neural network.
\newblock In \emph{IEEE Conference on Computer Vision and Pattern Recognition},
  pp.\  13558--13567, 2020.

\bibitem[He et~al.(2016)He, Zhang, Ren, and Sun]{he2016deep}
Kaiming He, Xiangyu Zhang, Shaoqing Ren, and Jian Sun.
\newblock Deep residual learning for image recognition.
\newblock In \emph{IEEE conference on Computer Vision and Pattern Recognition},
  pp.\  770--778, 2016.

\bibitem[Ho \& Chang(2020)Ho and Chang]{ho2020tcl}
Nguyen-Dong Ho and Ik-Joon Chang.
\newblock Tcl: an ann-to-snn conversion with trainable clipping layers.
\newblock \emph{arXiv preprint arXiv:2008.04509}, 2020.

\bibitem[Izhikevich(2003)]{izhikevich2003simple}
Eugene~M Izhikevich.
\newblock Simple model of spiking neurons.
\newblock \emph{IEEE Transactions on neural networks}, 14\penalty0
  (6):\penalty0 1569--1572, 2003.

\bibitem[Kheradpisheh \& Masquelier(2020)Kheradpisheh and
  Masquelier]{kheradpisheh2020temporal}
Saeed~Reza Kheradpisheh and Timoth{\'e}e Masquelier.
\newblock Temporal backpropagation for spiking neural networks with one spike
  per neuron.
\newblock \emph{International Journal of Neural Systems}, 30\penalty0
  (06):\penalty0 2050027, 2020.

\bibitem[Kim et~al.(2020)Kim, Kim, and Kim]{kim2020unifying}
Jinseok Kim, Kyungsu Kim, and Jae-Joon Kim.
\newblock Unifying activation- and timing-based learning rules for spiking
  neural networks.
\newblock In \emph{Advances in Neural Information Processing Systems}, pp.\
  19534--19544, 2020.

\bibitem[Krizhevsky et~al.(2009)Krizhevsky, Hinton,
  et~al.]{krizhevsky2009learning}
Alex Krizhevsky, Geoffrey Hinton, et~al.
\newblock Learning multiple layers of features from tiny images.
\newblock 2009.

\bibitem[Kundu et~al.(2021)Kundu, Datta, Pedram, and Beerel]{kundu2021spike}
Souvik Kundu, Gourav Datta, Massoud Pedram, and Peter~A Beerel.
\newblock Spike-thrift: Towards energy-efficient deep spiking neural networks
  by limiting spiking activity via attention-guided compression.
\newblock In \emph{Proceedings of the IEEE/CVF Winter Conference on
  Applications of Computer Vision (WACV)}, pp.\  3953--3962, 2021.

\bibitem[LeCun et~al.(1998)LeCun, Bottou, Bengio, and
  Haffner]{lecun1998gradient}
Yann LeCun, L{\'e}on Bottou, Yoshua Bengio, and Patrick Haffner.
\newblock Gradient-based learning applied to document recognition.
\newblock \emph{Proceedings of the IEEE}, 86\penalty0 (11):\penalty0
  2278--2324, 1998.

\bibitem[Lee et~al.(2020)Lee, Sarwar, Panda, Srinivasan, and
  Roy]{lee2020enabling}
Chankyu Lee, Syed~Shakib Sarwar, Priyadarshini Panda, Gopalakrishnan
  Srinivasan, and Kaushik Roy.
\newblock Enabling spike-based backpropagation for training deep neural network
  architectures.
\newblock \emph{Frontiers in Neuroscience}, 14, 2020.

\bibitem[Lee et~al.(2016)Lee, Delbruck, and Pfeiffer]{lee2016training}
Jun~Haeng Lee, Tobi Delbruck, and Michael Pfeiffer.
\newblock Training deep spiking neural networks using backpropagation.
\newblock \emph{Frontiers in Neuroscience}, 10:\penalty0 508, 2016.

\bibitem[Li et~al.(2021)Li, Deng, Dong, Gong, and Gu]{li2021free}
Yuhang Li, Shikuang Deng, Xin Dong, Ruihao Gong, and Shi Gu.
\newblock A free lunch from ann: Towards efficient, accurate spiking neural
  networks calibration.
\newblock In \emph{International Conference on Machine Learning}, pp.\
  6316--6325, 2021.

\bibitem[Loshchilov \& Hutter(2016)Loshchilov and Hutter]{loshchilov2016sgdr}
Ilya Loshchilov and Frank Hutter.
\newblock Sgdr: Stochastic gradient descent with warm restarts.
\newblock In \emph{International Conference on Learning Representations}, 2016.

\bibitem[Maass(1997)]{maas1997networks}
Wolfgang Maass.
\newblock {Networks of spiking neurons: the third generation of neural network
  models}.
\newblock \emph{Neural Networks}, 10\penalty0 (9):\penalty0 1659--1671, 1997.

\bibitem[Massa et~al.(2020)Massa, Marchisio, Martina, and
  Shafique]{massa2020efficient}
Riccardo Massa, Alberto Marchisio, Maurizio Martina, and Muhammad Shafique.
\newblock An efficient spiking neural network for recognizing gestures with a
  {DVS} camera on the {Loihi} neuromorphic processor.
\newblock In \emph{International Joint Conference on Neural Networks}, pp.\
  1--9, 2020.

\bibitem[McCulloch \& Pitts(1943)McCulloch and Pitts]{mcculloch1943logical}
Warren~S McCulloch and Walter Pitts.
\newblock A logical calculus of the ideas immanent in nervous activity.
\newblock \emph{The Bulletin of Mathematical Biophysics}, 5\penalty0
  (4):\penalty0 115--133, 1943.

\bibitem[Merolla et~al.(2014)Merolla, Arthur, Alvarez-Icaza, Cassidy, Sawada,
  Akopyan, Jackson, Imam, Guo, Nakamura, et~al.]{merolla2014million}
Paul~A Merolla, John~V Arthur, Rodrigo Alvarez-Icaza, Andrew~S Cassidy, Jun
  Sawada, Filipp Akopyan, Bryan~L Jackson, Nabil Imam, Chen Guo, Yutaka
  Nakamura, et~al.
\newblock A million spiking-neuron integrated circuit with a scalable
  communication network and interface.
\newblock \emph{Science}, 345\penalty0 (6197):\penalty0 668--673, 2014.

\bibitem[Neftci et~al.(2019)Neftci, Mostafa, and Zenke]{neftci2019surrogate}
Emre~O Neftci, Hesham Mostafa, and Friedemann Zenke.
\newblock Surrogate gradient learning in spiking neural networks: Bringing the
  power of gradient-based optimization to spiking neural networks.
\newblock \emph{IEEE Signal Processing Magazine}, 36\penalty0 (6):\penalty0
  51--63, 2019.

\bibitem[Pei et~al.(2019)Pei, Deng, Song, Zhao, Zhang, Wu, Wang, Zou, Wu, He,
  et~al.]{pei2019towards}
Jing Pei, Lei Deng, Sen Song, Mingguo Zhao, Youhui Zhang, Shuang Wu, Guanrui
  Wang, Zhe Zou, Zhenzhi Wu, Wei He, et~al.
\newblock Towards artificial general intelligence with hybrid tianjic chip
  architecture.
\newblock \emph{Nature}, 572\penalty0 (7767):\penalty0 106--111, 2019.

\bibitem[Qiao et~al.(2015)Qiao, Mostafa, Corradi, Osswald, Stefanini,
  Sumislawska, and Indiveri]{qiao2015reconfigurable}
Ning Qiao, Hesham Mostafa, Federico Corradi, Marc Osswald, Fabio Stefanini,
  Dora Sumislawska, and Giacomo Indiveri.
\newblock A reconfigurable on-line learning spiking neuromorphic processor
  comprising 256 neurons and 128{K} synapses.
\newblock \emph{Frontiers in neuroscience}, 9:\penalty0 141, 2015.

\bibitem[Rathi \& Roy(2021)Rathi and Roy]{rathi2021diet}
Nitin Rathi and Kaushik Roy.
\newblock Diet-snn: A low-latency spiking neural network with direct input
  encoding and leakage and threshold optimization.
\newblock \emph{IEEE Transactions on Neural Networks and Learning Systems},
  2021.

\bibitem[Rathi et~al.(2019)Rathi, Srinivasan, Panda, and
  Roy]{rathi2019enabling}
Nitin Rathi, Gopalakrishnan Srinivasan, Priyadarshini Panda, and Kaushik Roy.
\newblock Enabling deep spiking neural networks with hybrid conversion and
  spike timing dependent backpropagation.
\newblock In \emph{International Conference on Learning Representations}, 2019.

\bibitem[Roy et~al.(2019)Roy, Jaiswal, and Panda]{roy2019towards}
Kaushik Roy, Akhilesh Jaiswal, and Priyadarshini Panda.
\newblock {Towards spike-based machine intelligence with neuromorphic
  computing}.
\newblock \emph{Nature}, 575\penalty0 (7784):\penalty0 607--617, 2019.

\bibitem[Rueckauer et~al.(2016)Rueckauer, Lungu, Hu, and
  Pfeiffer]{rueckauer2016theory}
Bodo Rueckauer, Iulia-Alexandra Lungu, Yuhuang Hu, and Michael Pfeiffer.
\newblock Theory and tools for the conversion of analog to spiking
  convolutional neural networks.
\newblock \emph{arXiv preprint arXiv:1612.04052}, 2016.

\bibitem[Rueckauer et~al.(2017)Rueckauer, Lungu, Hu, Pfeiffer, and
  Liu]{rueckauer2017conversion}
Bodo Rueckauer, Iulia-Alexandra Lungu, Yuhuang Hu, Michael Pfeiffer, and
  Shih-Chii Liu.
\newblock Conversion of continuous-valued deep networks to efficient
  event-driven networks for image classification.
\newblock \emph{Frontiers in Neuroscience}, 11:\penalty0 682, 2017.

\bibitem[Russakovsky et~al.(2015)Russakovsky, Deng, Su, Krause, Satheesh, Ma,
  Huang, Karpathy, Khosla, Bernstein, Berg, and Fei-Fei]{ILSVRC15}
Olga Russakovsky, Jia Deng, Hao Su, Jonathan Krause, Sanjeev Satheesh, Sean Ma,
  Zhiheng Huang, Andrej Karpathy, Aditya Khosla, Michael Bernstein,
  Alexander~C. Berg, and Li~Fei-Fei.
\newblock {ImageNet Large Scale Visual Recognition Challenge}.
\newblock \emph{International Journal of Computer Vision (IJCV)}, 115\penalty0
  (3):\penalty0 211--252, 2015.
\newblock \doi{10.1007/s11263-015-0816-y}.

\bibitem[Sengupta et~al.(2019)Sengupta, Ye, Wang, Liu, and
  Roy]{sengupta2019going}
Abhronil Sengupta, Yuting Ye, Robert Wang, Chiao Liu, and Kaushik Roy.
\newblock Going deeper in spiking neural networks: {VGG} and residual
  architectures.
\newblock \emph{Frontiers in Neuroscience}, 13:\penalty0 95, 2019.

\bibitem[Simonyan \& Zisserman(2014)Simonyan and Zisserman]{simonyan2014very}
Karen Simonyan and Andrew Zisserman.
\newblock Very deep convolutional networks for large-scale image recognition.
\newblock \emph{arXiv preprint arXiv:1409.1556}, 2014.

\bibitem[Singh et~al.(2021)Singh, Sarma, Lu, Sengupta, Narayanan, and
  Das]{singh2021gesture}
Sonali Singh, Anup Sarma, Sen Lu, Abhronil Sengupta, Vijaykrishnan Narayanan,
  and Chita~R Das.
\newblock Gesture-snn: Co-optimizing accuracy, latency and energy of snns for
  neuromorphic vision sensors.
\newblock In \emph{IEEE/ACM International Symposium on Low Power Electronics
  and Design}, pp.\  1--6, 2021.

\bibitem[St{\"o}ckl \& Maass(2021)St{\"o}ckl and Maass]{stockl2021optimized}
Christoph St{\"o}ckl and Wolfgang Maass.
\newblock Optimized spiking neurons can classify images with high accuracy
  through temporal coding with two spikes.
\newblock \emph{Nature Machine Intelligence}, 3\penalty0 (3):\penalty0
  230--238, 2021.

\bibitem[Tavanaei et~al.(2019)Tavanaei, Ghodrati, Kheradpisheh, Masquelier, and
  Maida]{tavanaei2019deep}
Amirhossein Tavanaei, Masoud Ghodrati, Saeed~Reza Kheradpisheh, Timoth{\'e}e
  Masquelier, and Anthony Maida.
\newblock Deep learning in spiking neural networks.
\newblock \emph{Neural Networks}, 111:\penalty0 47--63, 2019.

\bibitem[Wu et~al.(2018)Wu, Deng, Li, Zhu, and Shi]{wu2018STBP}
Yujie Wu, Lei Deng, Guoqi Li, Jun Zhu, and Luping Shi.
\newblock Spatio-temporal backpropagation for training high-performance spiking
  neural networks.
\newblock \emph{Frontiers in Neuroscience}, 12:\penalty0 331, 2018.

\bibitem[Wu et~al.(2019)Wu, Deng, Li, Zhu, Xie, and Shi]{wu2019direct}
Yujie Wu, Lei Deng, Guoqi Li, Jun Zhu, Yuan Xie, and Luping Shi.
\newblock Direct training for spiking neural networks: Faster, larger, better.
\newblock In \emph{AAAI Conference on Artificial Intelligence}, pp.\
  1311--1318, 2019.

\bibitem[Zenke \& Vogels(2021)Zenke and Vogels]{zenke2021remarkable}
Friedemann Zenke and Tim~P Vogels.
\newblock The remarkable robustness of surrogate gradient learning for
  instilling complex function in spiking neural networks.
\newblock \emph{Neural Computation}, 33\penalty0 (4):\penalty0 899--925, 2021.

\bibitem[Zhang \& Li(2020)Zhang and Li]{zhang2020temporal}
Wenrui Zhang and Peng Li.
\newblock Temporal spike sequence learning via backpropagation for deep spiking
  neural networks.
\newblock In \emph{Advances in Neural Information Processing Systems}, pp.\
  12022--12033, 2020.

\end{thebibliography}
\bibliographystyle{iclr2022_conference}

\newpage

\renewcommand{\thefigure}{S\arabic{figure}}
\setcounter{figure}{0}

\renewcommand{\thetable}{S\arabic{table}}
\setcounter{table}{0}

\renewcommand{\theequation}{S\arabic{equation}}
\setcounter{equation}{0}

\appendix
\section{Appendix}
\subsection{network structure and training configurations}
Before training ANNs, we first replace max-pooling with average-pooling and then replace the ReLU activation with the proposed quantization clip-floor-shift activation (\Eqref{annnew2}). 
After training, we copy all weights from the source ANN to the converted SNN, and set the threshold $\theta^l$ in each layer of the converted SNN equal to the maximum activation value $\lambda^l$ of the source ANN in the same layer.
Besides, we set the initial membrane potential $\bm{v}^l(0)$ in converted SNN as $\bm{\theta^l}/2$ to match the optimal shift $\bm{\varphi}=\frac{1}{2}$ of quantization clip-floor-shift activation in the source ANN.

Despite the common data normalization, we use some data pre-processing techniques. For CIFAR datasets, we resize the images into $32\times32$, and for ImageNet dataset, we resize the image into $224\times224$. Besides, we use random crop images, Cutout~\citep{devries2017improved} and AutoAugment~\citep{cubuk2019autoaugment} for all datasets.

We use the Stochastic Gradient Descent optimizer~\citep{bottou2012stochastic} with a momentum parameter of 0.9. The initial learning rate is set to 0.1 for CIFAR-10 and ImageNet, and 0.02 for CIFAR-100. A cosine decay scheduler~\citep{loshchilov2016sgdr} is used to adjust the learning rate. We apply a  $5 \times 10^{-4}$ weight decay for CIFAR datasets while applying a  $1 \times 10^{-4}$ weight decay for ImageNet. We train all models for 300 epochs. The quantization steps $L$ is set to 4 when training all the networks on CIFAR-10, and VGG-16, ResNet-18 on CIFAR-100 dataset. When training ResNet-20 on CIFAR-100, the parameter $L$ is set to 8. When training ResNet-34 and VGG-16 on ImageNet, the parameter $L$ is set to 8, 16, respectively. We use constant input when evaluating the converted SNNs. 

\subsection{Introduction of Datasets}
\textbf{CIFAR-10.} The CIFAR-10 dataset~\citep{krizhevsky2009learning} consists of 60000 $32 \times 32$ images in 10 classes. There are 50000 training images and 10000 test images.

\textbf{CIFAR-100.} The CIFAR-100 dataset~\citep{krizhevsky2009learning} consists of 60000 $32 \times 32$ images in 100 classes. There are 50000 training images and 10000 test images.

\textbf{ImageNet.} We use the ILSVRC 2012 dataset~\citep{ILSVRC15}, which consists 1,281,167 training images and 50000 testing images.

\subsection{Derivation of Equation 12 and Proof of Theorem 2}
\textbf{Derivation of \Eqref{snn_est}}

Similar to $,$ , We define
\begin{align}
    \bm{u}^l(t)=\bm{W}^l \bm{x}^{l-1}(t).
\end{align}
We use $u^l_i(t)$ and $z^l_i$ to denote the $i$-th element in vector $\bm{u}^l(t)$ and $\bm{z}^l$, respectively. To derive \Eqref{snn_est}, some extra assumptions on the relationship between ANN activation value and SNN postsynaptic potentials are needed,
which are showed in \Eqref{asp}.

\begin{align}
\label{asp}
\begin{cases}
    & \mathrm{if}~z^l_i<0,~~\mathrm{then}~\forall t~u^l_i(t)<0, \\
    & \mathrm{if}~0\leqslant z^l_i \leqslant  \theta_l,~~\mathrm{then}~\forall t~0 \leqslant u^l_i(t) \leqslant  \theta_l, \\
    & \mathrm{if}~z^l_i>\theta_l,~~\mathrm{then}~\forall t~u^l_i(t)>\theta_l.
\end{cases}
\end{align}

With the assumption above, we can discuss the firing behavior of the neurons in each time-step. When $z^l_i<0$ or $z^l_i>\theta_l$, the neuron will  never fire or fire all the time-steps, which means $\phi^l_i(T)=0$ or $\phi^l_i(T)=\theta^l$. In this situation, we can use a clip function to denote $\phi^l_i(T)$.
\begin{align}
\label{s3}
    \phi^l_i(T) &= \mathrm{clip}(z^l_i, 0, \theta^l).
\end{align}
When $0<z^l_i<\theta_l$, every input from the presynaptic neuron in SNNs falls into $ [0, \theta^l] $, then we have $\forall t,~v^l_i(t)\in[0, \theta]$. We can rewrite \Eqref{postpoten} into the following equation.
\begin{align}
\label{s4}
\frac{\phi^l_i(T) T}{\theta^l} = \frac{z^l_i T + v^l_i(0)}{\theta^l} - \frac{v^l_i(T)}{\theta^l}. 
\end{align}

Considering that $\frac{\phi^l_i(T) T}{\theta^l} = \sum_{t=1}^{T} s^l_i(t) \in \mathbb{N}$ and $0<\frac{v^l_i(T)}{\theta^l}<1$, \Eqref{s4} is changed to:
\begin{align}
\phi^l_i(T) &= \frac{\theta^l}{T} \left \lfloor \frac{z^l_i T + v^l_i(0)}{\theta^l} \right \rfloor.
\end{align}

We combine these two situations (\Eqref{s3} and \Eqref{s4}), and we have:
\begin{align}
    \bm{\phi}^l(T) = \theta^{l}~\mathrm{clip} \left( \frac{1}{T}\left \lfloor \frac{\bm{z}^{l} T  + \bm{v}^{l}(0)}{\theta^{l}} \right \rfloor, 0, 1 \right).
\end{align}

\textbf{Proof of Theorem 2}

Before prove Theorem \ref{t2}, we first introduce Lemma \ref{l1}.
\begin{lemma}
\label{l1}
If random variable $x\in [0, \theta]$ is uniformly distributed in every small interval $[m_{t}, m_{t+1}]$ with the probability density function $p_t$ ($t=0,1,...,T$), where $m_{0}=0, m_{T+1}=\theta, m_t=\frac{(t-\frac{1}{2})\theta}{T}$ for $t=1,2,...,T$, $p_0=p_T$, we can conclude that
\begin{align}
    \mathbb{E}_x \left( x - \frac{\theta}{T} \left \lfloor \frac{Tx}{\theta} + \frac{1}{2} \right \rfloor \right) = 0.
\end{align}
\end{lemma}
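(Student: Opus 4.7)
The plan is to exploit the piecewise structure of both the density and the step function $\frac{\theta}{T}\lfloor \tfrac{Tx}{\theta}+\tfrac{1}{2}\rfloor$ on the given partition $\{m_0,m_1,\ldots,m_{T+1}\}$ of $[0,\theta]$. First, I would verify that this partition is precisely aligned with the jumps of the step function: for $t=1,\ldots,T-1$, any $x\in[m_t,m_{t+1}]$ satisfies $\tfrac{Tx}{\theta}+\tfrac{1}{2}\in[t,t+1]$, so the floor equals $t$; while on the two end segments, $x\in[m_0,m_1]$ gives floor $0$ and $x\in[m_T,m_{T+1}]$ gives floor $T$. Hence $\frac{\theta}{T}\lfloor \tfrac{Tx}{\theta}+\tfrac{1}{2}\rfloor$ is exactly $\tfrac{t\theta}{T}$ on the $t$-th subinterval.

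Next, using the piecewise-uniform density, the expectation decomposes as
\begin{align*}
\mathbb{E}_x\!\left(x-\tfrac{\theta}{T}\left\lfloor\tfrac{Tx}{\theta}+\tfrac{1}{2}\right\rfloor\right)
=\sum_{t=0}^{T} p_t\int_{m_t}^{m_{t+1}}\!\left(x-\tfrac{t\theta}{T}\right)dx
=\sum_{t=0}^{T} p_t\,(m_{t+1}-m_t)\!\left(\tfrac{m_t+m_{t+1}}{2}-\tfrac{t\theta}{T}\right),
\end{align*}
where in the last step I used the elementary identity $\int_a^b(x-c)\,dx=(b-a)\bigl(\tfrac{a+b}{2}-c\bigr)$.

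The third step is the key observation: for every interior index $t=1,\ldots,T-1$, the midpoint $\tfrac{m_t+m_{t+1}}{2}$ equals exactly $\tfrac{t\theta}{T}$ by construction of $m_t=\tfrac{(t-1/2)\theta}{T}$, so each interior summand vanishes. Only the two boundary intervals survive. A direct calculation gives the $t=0$ contribution as $p_0\cdot\tfrac{\theta}{2T}\cdot\tfrac{\theta}{4T}=+\tfrac{\theta^2}{8T^2}p_0$ and the $t=T$ contribution as $p_T\cdot\tfrac{\theta}{2T}\cdot\bigl(-\tfrac{\theta}{4T}\bigr)=-\tfrac{\theta^2}{8T^2}p_T$. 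Their sum is $\tfrac{\theta^2}{8T^2}(p_0-p_T)$, which vanishes by the hypothesis $p_0=p_T$, completing the proof.

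The main obstacle, while not deep, is the careful boundary book-keeping: the two outer intervals $[m_0,m_1]$ and $[m_T,m_{T+1}]$ are half the length of the interior intervals, and the step function is centered on the interior grid points but asymmetric at the ends. This asymmetry is exactly what forces a residual proportional to $p_0-p_T$, so the hypothesis $p_0=p_T$ plays a precise and unavoidable role rather than being a technical convenience.
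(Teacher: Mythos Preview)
Your proof is correct and follows essentially the same approach as the paper: both split the expectation along the partition $\{m_t\}$, observe that the floor function is constant equal to $t\theta/T$ on the $t$-th subinterval, show the interior contributions vanish (you phrase this via the midpoint identity, the paper by direct integration), and compute the two boundary terms as $\pm\theta^2/(8T^2)$ times $p_0$ and $p_T$ respectively. The only difference is cosmetic; your use of $\int_a^b(x-c)\,dx=(b-a)\bigl(\tfrac{a+b}{2}-c\bigr)$ makes the vanishing of the interior terms slightly more transparent.
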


\begin{proof}
\begin{align}
    & \mathbb{E}_{x} \left( x - \frac{\theta}{T} \left \lfloor \frac{ T x}{\theta} + \frac{1}{2} \right \rfloor  \right) 
    = \int_{0}^{\theta/2T}  p_0  \left(x- \frac{\theta}{T} \left \lfloor \frac{xT}{\theta}+\frac{1}{2} \right \rfloor \right)~\mathrm{d}x \nonumber \\
    & + \sum_{t=1}^{T-1} \int_{(2t-1)\theta/2T}^{(2t+1)\theta/2T} p_t \left (x - \frac{\theta}{T} \left \lfloor \frac{xT}{\theta}+\frac{1}{2} \right \rfloor  \right)~\mathrm{d}x \nonumber\\
    & + \int_{(2T-1)\theta/2T}^{\theta} p_T \left(x - \frac{\theta}{T} \left \lfloor \frac{xT}{\theta}+\frac{1}{2} \right \rfloor \right)~\mathrm{d}x \nonumber \\
    & = p_0 \int_{0}^{\theta/2T} x~\mathrm{d}x + \sum_{t=1}^{T-1} p_t \int_{(2t-1)\theta/2T}^{(2t+1)\theta/2T} (x-\frac{t\theta}{T})~\mathrm{d}x + p_T \int_{(2T-1)\theta/2T}^{\theta} (x-\theta)~\mathrm{d}x \nonumber\\
    & = p_0 \frac{\theta^2}{8T^2} + 0 - p_T \frac{\theta^2}{8T^2} = (p_0-p_T)\frac{\theta^2}{8T^2} = 0.
\end{align}

\end{proof}

\textbf{Theorem 2.} \textit{An ANN with activation function (\ref{annnew2}) is converted to an SNN with the same weights. If $\theta^l=\lambda^l$, $\bm{v}^{l}(0)=\theta^l\bm{\varphi}$, 
then for arbitrary $T$ and $L$, the expectation of conversion error reaches $\bm{0}$ when the shift term $\bm{\varphi}$ in source ANN is $\bm{\frac{1}{2}}$.}
\begin{align}
\forall\ T,L \quad
\left.\mathbb{E}_z \left ( {\widetilde{\bm{Err}}}^l \right ) \right |_{\bm{\varphi} = \bm{\frac{1}{2}}} &= \bm{0}. 
\end{align}

\begin{proof}

\begin{align}
\left.\mathbb{E}_z \left ( {\widetilde{\bm{Err}}}^l \right ) \right |_{\bm{\varphi} = \bm{\frac{1}{2}}} &= 
 \mathbb{E}_{z} \left ( \frac{\theta^l}{T} \left \lfloor \frac{\bm{z}^l T + \bm{v}^l(0)}{\theta^l} \right \rfloor - \frac{\lambda^l}{L} \left \lfloor \frac{\bm{z}^l L}{\lambda}+\bm{\varphi} \right \rfloor \right ).  
\end{align}

As every element in vector $\bm{z}$ is identical, we only need to consider one element.
\begin{align}
    & \mathbb{E}_{z_i} \left ( \frac{\theta^l}{T} \left \lfloor \frac{z_i^l T + v^l_i(0)}{\theta^l} \right \rfloor - \frac{\lambda^l}{L} \left \lfloor \frac{z_i^l L}{\lambda}+\varphi_i \right \rfloor \right) \nonumber \\
    & = \mathbb{E}_{z_i} \left ( \frac{\theta^l}{T} \left \lfloor \frac{z_i^l T + v^l_i(0)}{\theta^l} \right \rfloor - z_i^l \right ) + \mathbb{E}_{z_i} \left (z_i^l - \frac{\lambda^l}{L} \left \lfloor \frac{z_i^l L}{\lambda}+\varphi_i \right \rfloor \right).
\end{align}

According to Lemma \ref{l1}, we have 
\begin{align}
   & \left. \mathbb{E}_{z_i} \left ( \frac{\theta^l}{T} \left \lfloor \frac{z_i^l T + v^l_i(0)}{\theta^l} \right \rfloor - z_i^l \right ) \right |_{v_i^l(0)=1/2} = 0, \\
   &  \left. \mathbb{E}_{z_i} \left (z_i^l - \frac{\lambda^l}{L} \left \lfloor \frac{z_i^l L}{\lambda}+\varphi_i \right \rfloor \right) \right |_{\varphi=1/2} = 0.
\end{align}

Thus the sum of both terms also equals zero.
\end{proof}

\subsection{Comparison of the methods with or without dynamic threshold on the CIFAR-100 dataset}
In this paper we use a training parameter $\lambda^l$ to decide the maximum value of ANN activation. The previous works suggested to set the maximum value of ANN activation after training as the threshold. If we  set  $\theta^l = \max_{\bm{s} \in \{0,1\}^n} \left ( \max(\theta^{l-1}\bm{W}^l\bm{s}) \right)$, the situation of fewer spikes as expected never happens, as we can prove that $\bm{v}^l(T)< \theta^l$ (see Theorem 3). Despite this, there still exists the situation of more spikes as expected. An example is given in Figure~\ref{figS1}. Here we consider the same example as in Figure~\ref{fig1}. In source ANN, we suppose that two analog neurons in layer $l-1$ are connected to an analog neuron in layer $l$ with weights 2 and -2, and the output vector $\bm{a}^{l-1}$ of neurons in layer $l-1$ is $[0.6, 0.4]$. Besides, in converted SNN, we suppose that the two spiking neurons in layer $l-1$ fire 3 spikes and 2 spikes in 5 time-steps (T=5), respectively, and the threshold $\theta^{l-1}=1$. Thus, $\bm{\phi}^{l-1}(T)=\frac{\sum_{i=1}^{T} \bm{s}^{l-1}(i)}{T}\theta^{l-1}=[0.6, 0.4]$. According to  \Eqref{ann}, the ANN output $\bm{a}^{l}=\bm{W}^{l}\bm{a}^{l-1}=[2,-2] [0.6, 0.4]^{T}=0.4$. As for SNN, we suppose that the presynaptic neurons fires at $t=1,2,3$ and $t=4,5$, respectively. Even through we set the threshold $\theta^{l}=1$ to the maximum activation 2, the postsynaptic neuron will fire three spikes at $t=1,2,3$, and  $\bm{\phi}^{l}(T)=0.6>\bm{a}^{l}$. 

Besides, setting $\max_{\bm{s} \in \{0,1\}^n} \left ( \max(\theta^{l-1}\bm{W}^l\bm{s}) \right)$ as the threshold brings two other problems. First, the spiking neurons will take a long time to fire spikes because of the large value of the threshold, which makes it hard to maintain SNN performance within a few time-steps. Second, the quantization error will be large as it is proportional to the threshold. If the conversion error is not zero for one layer, it will propagate layer by layer and will be magnified by larger quantization errors. We compare our method and the method of setting the maximum activation on the CIFAR-100 dataset. The results are reported in Table \ref{tab:thre}, where DT represents the dynamic threshold in our method. The results show that our method can achieve better performance. 

\begin{figure}[t] 
\centering
\includegraphics[width=0.40\textwidth]{./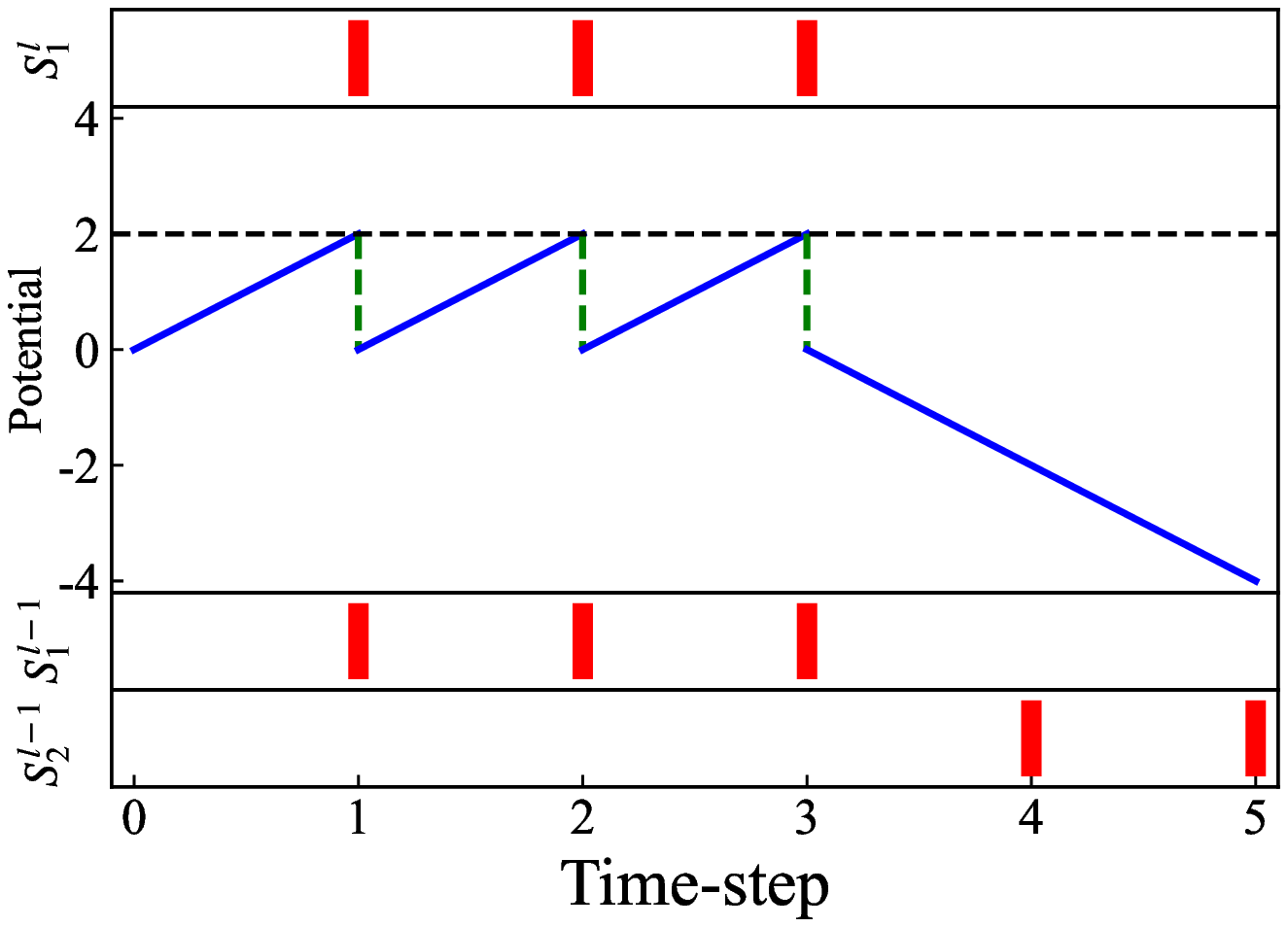} 
\caption{More spikes than expected exists for the method of setting the maximum activation.}
\label{figS1} 
\end{figure}

\textbf{Theorem 3.} 
\textit{If the threshold is set to the maximum  value of ANN activation, that is} $\theta^l = \max_{\bm{s} \in \{0,1\}^n} \left ( \max(\theta^{l-1}\bm{W}^l\bm{s}) \right)$, \textit{and} $v^l_i(0) < \theta^l$. \textit{Then at any time-step, the membrane potential of each neuron after spike} ${v}_i^l(t)$ \textit{will be less than} $\theta^l$\textit{, where $i$ represents the index of each neuron.}
\begin{proof}
 We prove it by induction. For $t=0$, it is easy to see $v^l_i(0) < \theta^l$.
 For $t>0$, we suppose that $v^l_i(t-1) < \theta^l$.
Since we have set the threshold to the maximum possible input, and $x^{l-1}_i(t)$ represents the input from layer $l-1$ to the $i$-th neuron in layer $l$, $x^{l-1}_i(t)$ will be no larger than $\theta^l$ for arbitrary $t$. 
Thus we have 
\begin{align}
     m^l_i(t) &= v^l_i(t-1) + x^{l-1}_i(t) < \theta^l + \theta^l=2 \theta^l,\\
    s^l_i(t)&=H(m^l_i(t)-\theta^l),\\
     v^l_i(t) &= m^l_i(t) - s^l_i(t)\theta^l.
\end{align}
If $\theta^l \leqslant m^l_i(t)< 2\theta^l$, then we have $v^l_i(t)=m^l_i(t) - \theta^l< \theta^l$. If $m^l_i(t) < \theta_l$, then $~v^l_i(t)=m^l_i(t) < \theta_l$.  
By mathematical induction,   $v^l_i(t)<\theta^l$ holds for any $t \geqslant  0$. 
\end{proof}

\begin{table}[t]
\caption{Comparison between our method and the method of setting the maximum activation.}
\label{tab:thre}
\centering
\scalebox{0.9}
{
\begin{threeparttable}
\begin{tabular}{llllllllll}
\toprule
DT\tnote{1}   & w/o shift  & T=4   & T=8   & T=16  & T=32  & T=64  & T=128 & T=256 & T$\geq$512 \\ \midrule[1pt]
\multicolumn{10}{c}{\textbf{VGG-16 on CIFAR-100 with L=4}}                  \\ \midrule[1pt]
$\color{red}\checkmark$ & $\color{red}\checkmark$ & 69.62\% & 73.96\% & 76.24\% & 77.01\% & 77.10\%  & 77.05\% & 77.08\% & 77.08\%  \\ \midrule
$\color{red}\checkmark$ & $\color{red}\times$ & 21.57\% & 41.13\% & 58.92\% & 65.38\% & 64.19\% & 58.60\% & 52.99\% & 49.41\% \\ \midrule
$\color{red}\times$ & $\color{red}\checkmark$ & 1.00\% & 0.96\% & 1.00\% & 1.10\% & 2.41\%  & 13.76\% & 51.70\% & 77.10\% \\ \midrule
$\color{red}\times$ & $\color{red}\times$ & 1.00\% & 1.00\% & 0.90\% & 1.00\% & 1.01\% & 2.01\% & 19.59\% & 70.86\% \\ \bottomrule
\end{tabular}
    \begin{tablenotes}
        \footnotesize
			\item[1] Dynamic threshold.
    \end{tablenotes}
\end{threeparttable}
}
\end{table}

\subsection{Effect of quantization steps L}
Table~\ref{tab_quan} reports the performance of converted SNNs with different quantization steps $L$ and different time-steps $T$. For VGG-16 and quantization steps $L=2$, we achieve an accuracy of 86.53\% on CIFAR-10 dataset and an accuracy of 61.41\% on CIFAR-100 dataset with 1 time-steps. When the quantization steps $L=1$, we cannot train the source ANN.
\begin{table}[t]
 \caption{Influence of different quantization steps.}
     \label{tab_quan}
\centering
\begin{tabular}{lllllllll}
\toprule
quantization\\steps   & T=1   & T=2   & T=4   & T=8   & T=16  & T=32  & T=64  & T=128 \\ \midrule
\multicolumn{9}{c}{\textbf{VGG-16 on CIFAR-10}}                   \\ \midrule
L=2  & 86.53\% & 91.98\% & 93.00\%  & 93.95\% & 94.18\% & 94.22\% & 94.18\% & 94.14\% \\ \midrule
L=4  & 88.41\% & 91.18\% & 93.96\% & 94.95\% & 95.40\%  & 95.54\% & 95.55\% & 95.59\% \\ \midrule
L=8  & 62.89\% & 83.93\% & 91.77\% & 94.45\% & 95.22\% & 95.56\% & 95.74\% & 95.79\% \\ \midrule
L=16 & 61.48\% & 76.76\% & 89.61\% & 93.03\% & 93.95\% & 94.24\% & 94.25\% & 94.22\% \\ \midrule
L=32 & 13.05\% & 73.33\% & 89.67\% & 94.13\% & 95.31\% & 95.66\% & 95.73\% & 95.77\% \\ \midrule
\multicolumn{9}{c}{\textbf{ResNet-20 on CIFAR-10}}                \\ \midrule
L=2  & 77.54\% & 82.12\% & 85.77\% & 88.04\% & 88.64\% & 88.79\% & 88.85\% & 88.76\% \\ \midrule
L=4  & 62.43\% & 73.2\%  & 83.75\% & 89.55\% & 91.62\% & 92.24\% & 92.35\% & 92.35\% \\ \midrule
L=8  & 46.19\% & 58.67\% & 75.70\%  & 87.79\% & 92.14\% & 93.04\% & 93.34\% & 93.24\% \\ \midrule
L=16 & 30.96\% & 39.87\% & 57.04\% & 79.5\%  & 90.87\% & 93.25\% & 93.44\% & 93.48\% \\ \midrule
L=32 & 22.15\% & 27.83\% & 43.56\% & 70.15\% & 88.81\% & 92.97\% & 93.48\% & 93.48\% \\ \midrule
\multicolumn{9}{c}{\textbf{VGG-16 on CIFAR-100}}                  \\ \midrule
L=2  & 61.41\% & 64.96\% & 68.0\%  & 70.72\% & 71.87\% & 72.28\% & 72.35\% & 72.4\%  \\ \midrule
L=4  & 57.5\%  & 63.79\% & 69.62\% & 73.96\% & 76.24\% & 77.01\% & 77.1\%  & 77.05\% \\ \midrule
L=8  & 44.98\% & 52.46\% & 62.09\% & 70.71\% & 74.83\% & 76.41\% & 76.73\% & 76.73\% \\ \midrule
L=16 & 33.12\% & 41.71\% & 53.38\% & 65.76\% & 72.80\%  & 75.6\%  & 76.37\% & 76.36\% \\ \midrule
L=32 & 15.18\% & 21.41\% & 32.21\% & 50.46\% & 67.32\% & 74.6\%  & 76.18\% & 76.24\% \\ \midrule
\multicolumn{9}{c}{\textbf{ResNet-20 on CIFAR-100}}               \\ \midrule
L=2  & 38.65\% & 47.35\% & 55.23\% & 59.69\% & 61.29\% & 61.5\%  & 61.03\% & 60.81\% \\ \midrule
L=4  & 25.62\% & 36.33\% & 51.55\% & 63.14\% & 66.70\%  & 67.47\% & 67.47\% & 67.41\% \\ \midrule
L=8  & 13.19\% & 19.96\% & 34.14\%& 55.37\% & 67.33\% & 69.82\% & 70.49\% & 70.55\% \\ \midrule
L=16 & 6.09\%  & 9.25\%  & 17.48\% & 38.22\% & 60.92\% & 68.70\%  & 70.15\% & 70.20\%  \\ \midrule
L=32 & 5.44\%  & 7.41\%  & 13.36\% & 31.66\% & 58.68\% & 68.12\% & 70.12\% & 70.27\% \\ 
\bottomrule
\end{tabular}
\end{table}

\subsection{Comparison with state-of-the-art supervised training methods on CIFAR-10 dataset}
Notably, our ultra-low latency performance is comparable with other state-of-the-art supervised training methods. Table~\ref{tab_comp_other_method} reports the results of hybrid training and backpropagation methods on CIFAR-10. The backpropagation methods require sufficient time-steps to convey discriminate information. Thus, the list methods need at least 5 time-steps to achieve $\sim$91\% accuracy.
On the contrary, our method can achieve 94.73\% accuracy with 4 time-steps. Besides, the hybrid training method requires 200 time-steps to obtain 92.02\% accuracy because of further training with STDB, whereas our method achieves 93.96\% accuracy with 4 time-steps. 
\begin{table}[t]
\caption{Compare with state-of-the-art supervised training methods on CIFAR-10 dataset}
\label{tab_comp_other_method}
\centering
\scalebox{0.9}
{
\begin{threeparttable}
\begin{tabular}{lllll}
\toprule
Model & Method   & Architecture & SNN Accuracy & Timesteps \\ \midrule
\multicolumn{5}{c}{\textbf{CIFAR-10}}                  \\ \midrule
HC & Hybrid   & VGG-16     & 92.02    & 200 \\ \midrule
STBP   & Backprop & CIFARNet   & 90.53    & 12  \\ \midrule
DT   & Backprop & CIFARNet   & 90.98    & 8   \\ \midrule
TSSL  & Backprop & CIFARNet   & 91.41    & 5   \\ \midrule
DThIR\tnote{1}  & ANN-SNN & cNet   & 77.10    & 256   \\ \midrule
\textbf{Ours}  & ANN-SNN  & VGG-16       & 93.96    & 4  \\ \midrule
\textbf{Ours}  & ANN-SNN  & CIFARNet\tnote{2}    & 94.73    & 4   \\ \bottomrule
\end{tabular}
\begin{tablenotes}
		\footnotesize
		\item[1] Implemented on Loihi neuromorphic processor
		\item[2] For CIFARNet, we use the same architecture as \cite{wu2018STBP}.
\end{tablenotes}
\end{threeparttable}
}
\end{table}

\subsection{Comparison on CIFAR-100 dataset}
Table~\ref{tab_acc_convert_cifar100} reports the results on CIFAR-100, our method also outperforms the others both in terms of high accuracy and 
ultra-low latency.  For VGG-16, the accuracy of the proposed method is 3.46\% higher than SNNC-AP and 69.37\% higher than RTS when $T=32$. When the time-steps is only 4, we can still achieve an accuracy of 69.62\%. These results demonstrate that our method outperforms the previous conversion methods. 

\begin{table}[ht]
    \caption{Comparison between the proposed method and previous works on CIFAR-100 dataset.}
    \label{tab_acc_convert_cifar100}
\centering
\renewcommand\arraystretch{1.2}
\scalebox{0.90}
{
\begin{threeparttable}
\begin{tabular}{@{}clllllllll@{}} \hline
Architecture & Method         & ANN & T=2   & T=4  & T=8  & T=16  & T=32 & T=64 &T$\geq$512 \\ \hline
\multirow{5}{*}{VGG-16}
&RMP    & 71.22\%    & -     & -     & -     & -     & -    & -   & 70.93\%               \\ \cline{2-10}
&TSC        & 71.22\%    & -     & -     & -     & -     & -    & -   & 70.97\%               \\ \cline{2-10}
&RTS       & 77.89\%    & -     & -     & -     & -    & 7.64\%   & 21.84\%   & 77.71\%               \\ \cline{2-10}
&SNNC-AP        & 77.89\%    & -     & -     & -     & -    & 73.55\%  & 76.64\%   & 77.87\%               \\ \cline{2-10}
&\textbf{Ours}  & 76.28\%    & 63.79\% & 69.62\% & 73.96\% & 76.24\% & 77.01\% &  77.10\% & 77.08\%               \\ \hline
\multirow{3}{*}{ResNet-20}
&RMP      & 68.72\%    & -     & -     & -     & -    & 27.64\%  & 46.91\%  & 67.82\%               \\ \cline{2-10}
&TSC          & 68.72\%    & -     & -     & -     & -     & -     & -      & 68.18\%               \\ \cline{2-10}
&\textbf{Ours}   & 69.94\%    & 19.96\% & 34.14\% & 55.37\% & 67.33\% & 69.82\% & 70.49\% & 70.50\%               \\ \hline
\multirow{3}{*}{ResNet-18}
&RTS    & 77.16\%    & -     & -     & -   & -   & 51.27\%   & 70.12\%   & 77.19\%               \\ \cline{2-10}
&SNNC-AP    & 77.16\%    & -     & -     & -   & -   & 76.32\%   & 77.29\%   & 77.25\%               \\ \cline{2-10}
&\textbf{Ours}        & 78.80\%    & 70.79\% & 75.67\% & 78.48\% & 79.48\% & 79.62\% & 79.54\% & 79.61\%               \\ \hline
\end{tabular}
    \begin{tablenotes}
		\footnotesize
		\item[1] RTS and SNNC-AP use altered ResNet-18, while ours use standard ResNet-18.
    \end{tablenotes}
\end{threeparttable}
}
\end{table}

\subsection{Energy consumption analysis}
We evaluate the energy consumption of our method and the compared methods~\citep{li2021free, deng2020optimal} on CIFAR-100 datasets. Here we use the same network structure of VGG-16. Following the analysis in \cite{merolla2014million}, we use synaptic operation (SOP) for SNN to represent the required basic operation numbers to classify one image. We utilize 77fJ/SOP for SNN and 12.5pJ/FLOP for ANN as the power consumption baseline, which is reported from the ROLLS neuromorphic processor \citep{qiao2015reconfigurable}. Note that we do not consider the memory access energy in our study because it depends on the hardware. As shown in Table~\ref{tab:energy}, when the time-steps is the same, the energy consumption of our method is about two times of SNNC-AP. However, to achieve the same accuracy of 73.55\%, our method requires less energy consumption.

\begin{table}[h]
    \caption{Comparison of the energy consumption with previous works}
    \label{tab:energy}
\centering
\renewcommand\arraystretch{1.4}
\scalebox{0.90}
{
\begin{threeparttable}
\begin{tabular}{@{}cllllllll@{}} \hline
Method &    & ANN & T=2   & T=4  & T=8  & T=16  & T=32 & T=64  \\ \hline
\multirow{3}{*}{RTS}
&Accuracy       & 77.89\%   & -     & -     & -     & -     & 7.64\%    & 21.84\%               \\ \cline{2-9}
&OP (GFLOP/GSOP)& 0.628     & -     & -     & -     & -     & 0.508     & 0.681           \\ \cline{2-9}
&Energy (mJ)    & 7.85      & -     & -     & -     & -     & 0.039     & 0.052        \\ \hline
\multirow{3}{*}{SNNC-AP}
&Accuracy       & 77.89\%   & -     & -     & -     & -     & 73.55\%   & 76.64\%         \\ \cline{2-9}
&OP (GFLOP/GSOP)& 0.628     & -     & -     & -     & -     & 0.857     & 1.22          \\ \cline{2-9}
&Energy (mJ)    & 7.85      & -     & -     & -     & -     & 0.660     & 0.094      \\ \hline
\multirow{3}{*}{\textbf{Ours}}
&Accuracy       & 76.28\%   & 63.79\%   & 69.62\%   & 73.96\%   & 76.24\%   & 77.01\%   & 77.10\%     \\ \cline{2-9}
&OP (GFLOP/GSOP)& 0.628     & 0.094     & 0.185     & 0.364     & 0.724     & 1.444     & 2.884       \\ \cline{2-9}
&Energy (mJ)    & 7.85      & 0.007     & 0.014     & 0.028     & 0.056     & 0.111     & 0.222       \\ \hline
\end{tabular}
\end{threeparttable}
}
\end{table}

\subsection{pseudo-code for overall conversion algorithm}
In this section, we summarize the entire conversion process in Algorithm \ref{algo:training}, including training ANNs from scratch and converting ANNs to SNNs. The QCFS in the pseudo-code represents the proposed quantization clip-floor-shift function.
\begin{algorithm}[t]
	\caption{Algorithm for ANN-SNN conversion.}
	\label{algo:training}
    \textbf{Input}: ANN model $M_{\text{ANN}}(\bm{x};\bm{W})$ with initial weight $\bm{W}$; Dataset $D$; Quantization step $L$; Initial dynamic thresholds $\bm{\lambda}$; Learning rate $\epsilon$.\\
    \textbf{Output}: $M_{\text{SNN}}(\bm{x};\bm{\hat{W}})$
	\begin{algorithmic}[1]
	    \FOR{$l = 1$ to $M_{\text{ANN}}.\text{layers}$}
	        \IF {is ReLU activation}
	            \STATE Replace $\text{ReLU}(\bm{x})$ by $\text{QCFS}(\bm{x}; L, \lambda^l)$
	        \ENDIF
	        \IF {is MaxPooling layer}
	            \STATE Replace MaxPooling layer by AvgPooling layer
	        \ENDIF
	    \ENDFOR
		\FOR{$e = 1$ to epochs}
	        \FOR{length of Dataset $D$}
	            \STATE Sample minibatch $(\bm{x}^0,\bm{y})$ from $D$
	            \FOR{$l = 1$ to $M_{\text{ANN}}.\text{layers}$}
	                \STATE $\bm{x}^{l} = \text{QCFS}(\bm{W}^l\bm{x}^{l-1}; L, \lambda^l)$
	            \ENDFOR
	            \STATE Loss = $\text{CrossEntropy}(\bm{x}^{l}, \bm{y})$
	            \FOR{$l = 1$ to $M_{\text{ANN}}.\text{layers}$}
		            \STATE $\bm{W}^l \leftarrow \bm{W}^l - \epsilon \frac{\partial Loss}{\partial \bm{W}^l}$
		            \STATE $\lambda^l \leftarrow \lambda^l - \epsilon \frac{\partial Loss}{\partial \lambda^l}$
		        \ENDFOR
	       \ENDFOR
		\ENDFOR
		\FOR{$l = 1$ to $M_{\text{ANN}}.\text{layers}$}
		\STATE $M_{\text{SNN}}.\hat{\bm{W}}^{l}\leftarrow M_{\text{ANN}}.{\bm{W}}^{l}$
	    \STATE $M_{\text{SNN}}.\theta^l\leftarrow M_{\text{ANN}}.\lambda^{l}$
	    \STATE $M_{\text{SNN}}.\bm{v}^l(0)\leftarrow M_{\text{SNN}}.\theta^l/2$
	    \ENDFOR
	    \RETURN $M_{\text{SNN}}$
	\end{algorithmic}
\end{algorithm}

\end{document}